\newtheorem{proposition}{Proposition}
\newtheorem*{proposition*}{Proposition}
\newtheorem{lemma}{Lemma}
\newtheorem*{lemma*}{Lemma}
\newtheorem*{theorem*}{Theorem}
\newrobustcmd*{\parentexttrack}[1]{%
  \begingroup
  \blx@blxinit
  \blx@setsfcodes
  \blx@bibopenparen#1\blx@bibcloseparen
  \endgroup}
\newcommand{\eg}{\textit{e.g.}}
\newcommand{\ie}{\textit{i.e.}}
\newcommand{\wrt}{\textit{w.r.t.}}
\newcommand{\st}{\textit{s.t.}}
\newcommand{\ELBOTermM}{elbo}
\newcommand{\ELBOTerm}{\textit{\ELBOTermM}}
\newcommand{\ReconsTerm}{\textit{reconstruction loss}}
\newcommand{\KLTermM}{\E_{p^{\star}(\vv{x})}\KLD[q_{\phi}(\vv{z}|\vv{x})\|p_{\lambda}(\vv{z})]}
\newcommand{\KLTerm}{$\KLTermM$}
\newcommand{\KLzTermM}{ \KLD[q_{\phi}(\vv{z})\|p_{\lambda}(\vv{z})]}
\newcommand{\KLzTerm}{$\KLzTermM$}
\newcommand{\KLzxTermM}{\E_{p^{\star}(\vv{x})}\KLD[q_{\phi}(\vv{z}|\vv{x})\|p_{\theta}(\vv{z}|\vv{x})]}
\newcommand{\KLzxTerm}{$\KLzxTermM$}
\newcommand{\MIzxTermM}{\MI_{\phi}[Z;X]}
\newcommand{\MIzx}{$\MIzxTermM$}
\newcommand{\MNIST}{MNIST}
\newcommand{\StaticMNIST}{StaticMNIST}
\newcommand{\FashionMNIST}{FashionMNIST}
\newcommand{\Omniglot}{Omniglot}
\newcommand{\RealNVP}{RNVP}
\newcommand{\AggregatedPosterior}{aggregated posterior}
\newcommand{\AggregatedPosteriorI}{\textit{\AggregatedPosterior}}
\newcommand{\DenseVAE}{DenseVAE}
\newcommand{\ConvVAE}{ConvVAE}
\newcommand{\ResnetVAE}{ResnetVAE}
\newcommand{\PixelVAE}{PixelVAE}
\newcommand{\ConvOrResnetVAE}{Conv/ResnetVAE}
\newcommand{\TwoZMark}{\dagger}
\newcommand{\ThreeOrMoreZMark}{\ddagger}
\newcommand{\TheLastColumnSentence}[1]{The last column of each #1 grid show the images from the training set, most similar to the second-to-last column in pixel-wise L2 distance.}
\newcommand{\tablehr}[1]{#1}
\newcommand{\eqnref}[1]{\cref{#1}}
\newcommand{\UnderBraceNumber}[1]{{\small{\textcircled{\scriptsize{#1}}}}}
\newcommand{\myparagraph}[1]{{\textbf{#1}\hspace{.7em}}}
\newcommand\numberthis{\addtocounter{equation}{1}\tag{\theequation}}
\newcommand{\vv}[1]{\bm{\mathrm{#1}}}
\newcommand{\E}{\mathbb{E}}
\newcommand{\Var}{\mathrm{Var}}
\newcommand{\KLD}{\mathrm{D}_{KL}}
\newcommand{\MI}{\mathbb{I}}
\newcommand{\Dim}{\mathrm{Dim}}
\newcommand{\Det}[1]{\operatorname{\mathrm{det}}\left({#1}\right)}
\newcommand{\AbsDet}[1]{\left|\Det{#1}\right|}
\newcommand{\LogMeanExp}{\operatorname{\mathrm{LogMeanExp}}}
\newcommand{\argmin}{\operatorname{\arg\min}}
\newcommand{\dd}[1]{\mathrm{d}{#1}}
\newcommand{\IdentityMapping}{\mathrm{id}}
\newcommand{\myleft}{\mathopen{}\mathclose\bgroup\left}
\newcommand{\myright}{\aftergroup\egroup\right}
\newcommand{\Exp}[1]{\exp\myleft({#1}\myright)}
\newcommand{\NetLinear}{\mathrm{Linear}}
\newcommand{\NetDense}{\mathrm{Dense}}
\newcommand{\NetConv}{\mathrm{Conv}}
\newcommand{\NetLinearConv}{\mathrm{LinearConv}}
\newcommand{\NetResnet}{\mathrm{Resnet}}
\newcommand{\NetDeConv}{\mathrm{DeConv}}
\newcommand{\NetDeResnet}{\mathrm{DeResnet}}
\newcommand{\NetPixelCNN}{\mathrm{PixelCNN}}
\newcommand{\NetFlatten}{\mathrm{Flatten}}
\newcommand{\NetUnFlatten}{\mathrm{UnFlatten}}
\newcommand{\NetConcat}{\mathrm{Concat}}
\newcommand{\NetInvertibleDense}{\mathrm{InvertibleDense}}
\newcommand{\NetActNorm}{\mathrm{ActNorm}}
\newcommand{\NetCouplingLayer}{\mathrm{CouplingLayer}}
\newcommand{\NetTo}{\rightarrow}
\title{
	On the Necessity and Effectiveness of \\
	Learning the Prior of Variational Auto-Encoder
}
\author{%
  \begin{minipage}{\linewidth}
    \centering	
    \begin{minipage}{0.33\linewidth}
   	  \centering
  	  Haowen~Xu \\
  	  Zhihan~Li
    \end{minipage}
    \begin{minipage}{0.15\linewidth}
  	  \centering
  	  Wenxiao~Chen \\
  	  Youjian~Zhao
    \end{minipage}
    \begin{minipage}{0.33\linewidth}
  	  \centering
  	  Jinlin~Lai \\
  	  Dan~Pei
    \end{minipage}
  \end{minipage}\vspace{.25em}
  \\
  Beijing National Research Center for Information Science and Technology\\
  Tsinghua University\\
  Beijing, China\\
  \texttt{\{xhw15,chen-wx17,laijl16,lizhihan17\}@mails.tsinghua.edu.cn}\\
  \texttt{\{zhaoyoujian,peidan\}@tsinghua.edu.cn}
}
\begin{document}

\maketitle

\begin{abstract}
	Using powerful posterior distributions is a popular approach to achieving better variational inference.
	However, recent works showed that the aggregated posterior may fail to match unit Gaussian prior, thus learning the prior becomes an alternative way to improve the lower-bound.
	In this paper, for the first time in the literature, we prove the necessity and effectiveness of learning the prior when aggregated posterior does not match unit Gaussian prior, analyze why this situation may happen, and propose a hypothesis that learning the prior may improve reconstruction loss, all of which are supported by our extensive experiment results.
	We show that using learned Real NVP prior and just one latent variable in VAE, we can achieve test NLL comparable to very deep state-of-the-art hierarchical VAE, outperforming many previous works with complex hierarchical VAE architectures.
%
\end{abstract}

\section{Introduction}
%

Variational auto-encoder (VAE)~\citep{kingmaAutoEncodingVariationalBayes2014,rezendeStochasticBackpropagationApproximate2014} is a powerful deep generative model.
The use of \textit{amortized variational inference} makes VAE scalable to deep neural networks and large amount of data.
Variational inference
demands the intractable true posterior to be approximated by a tractable distribution.
The original VAE used factorized Gaussian for both the prior and the variational posterior~\citep{kingmaAutoEncodingVariationalBayes2014,rezendeStochasticBackpropagationApproximate2014}.
Since then, lots of more expressive variational posteriors have been proposed~\citep{tranVariationalGaussianProcess2015,rezendeVariationalInferenceNormalizing2015,salimansMarkovChainMonte2015,nalisnickApproximateInferenceDeep2016,kingmaImprovedVariationalInference2016,meschederAdversarialVariationalBayes2017,bergSylvesterNormalizingFlows2018}.
However, recent work suggested that even with powerful posteriors, VAE may still fail to match \AggregatedPosteriorI{} to unit Gaussian prior~\citep{roscaDistributionMatchingVariational2018}, indicating there is still a gap between the approximated and the true posterior.

To improve the lower-bound, one alternative to using powerful posterior distributions is  to learn the prior as well, an idea initially suggested by \citet{hoffmanElboSurgeryAnother2016}. 
Later on, \citet{huangLearnableExplicitDensity2017} applied Real NVP~\citep{dinhDensityEstimationUsing2016} to learn the prior.
\citet{tomczakVAEVampPrior2018} proved the optimal prior is the \AggregatedPosteriorI{}, which they approximate by assembling a mixture of the posteriors with a set of learned pseudo-inputs.
\citet{bauerResampledPriorsVariational2018} constructed a rich prior by multiplying a simple prior with a learned acceptance function.
\citet{takahashiVariationalAutoencoderImplicit2018} introduced the \textit{kernel density trick} to estimate the KL divergence in ELBO and log-likelihood, without explicitly learning the \AggregatedPosteriorI{}.

Despite the above works, no formal proof has been made to show the necessity and effectiveness of learning the prior.
Also, the previous works on prior fail to present comparable results with state-of-the-art VAE models, unless equipped with hierarchical latent variables, making it unclear whether the reported performance gain actually came from the learned prior, or from the complex architecture.
In this paper, we will discuss the necessity and effectiveness of learning the prior, and conduct comprehensive experiments on several datasets with learned Real NVP priors and just one latent variable.
Our contributions are:
\begin{itemize}
	\item We are the first to prove the necessity and effectiveness of learning the prior when \AggregatedPosteriorI{} does not match unit Gaussian prior, give novel analysis on why this situation may happen, and propose novel hypothesis that learning the prior can improve reconstruction loss, all of which are supported by our extensive experiment results.
	\item We conduct comprehensive experiments on four binarized datasets with four different network architectures. Our results show that VAE with Real NVP prior consistently outperforms standard VAE and Real NVP posterior.
	\item We are the first to show that using learned Real NVP prior with just one latent variable in VAE, it is possible to achieve test negative log-likelihoods (NLLs) comparable to very deep state-of-the-art hierarchical VAE  on these four datasets, outperforming many previous works using complex hierarchical VAE equipped with rich priors/posteriors.
	\item We demonstrate that the learned prior can avoid assigning high likelihoods to low-quality interpolations on the latent space and to the recently discovered low posterior samples~\citep{roscaDistributionMatchingVariational2018}.
\end{itemize}

\section{Preliminaries}
\label{sec:preliminaries}
\subsection{Variational auto-encoder}

Variational auto-encoder (VAE)~\citep{kingmaAutoEncodingVariationalBayes2014,rezendeStochasticBackpropagationApproximate2014} is a deep probabilistic model.
It uses a latent variable $\vv{z}$ with prior $p_{\lambda}(\vv{z})$, and a conditional distribution $p_{\theta}(\vv{x}|\vv{z})$, to model the observed variable $\vv{x}$.
The likelihood of a given $\vv{x}$ is formulated as $p_{\theta}(\vv{x}) = \int_{\mathcal{Z}} p_{\theta}(\vv{x}|\vv{z})\,p_{\lambda}(\vv{z}) \,\dd{\vv{z}}$,
where $p_{\theta}(\vv{x}|\vv{z})$ is typically derived by a neural network with learnable parameter $\theta$.  Using variational inference, the log-likelihood $\log p_{\theta}(\vv{x})$ is bounded below by evidence lower-bound (ELBO) of $\vv{x}$~\eqref{eqn:elbo}:
\begin{align*}
    \log p_{\theta}(\vv{x}) &\geq \log p_{\theta}(\vv{x}) - \KLD\left[q_{\phi}(\vv{z}|\vv{x})\|p_{\theta}(\vv{z}|\vv{x})\right] \\
    &= \E_{q_{\phi}(\vv{z}|\vv{x})}\left[ \log p_{\theta}(\vv{x}|\vv{z}) + \log p_{\lambda}(\vv{z}) - \log q_{\phi}(\vv{z}|\vv{x}) \right] \\
    &= \mathcal{L}(\vv{x};\lambda,\theta,\phi) \numberthis\label{eqn:elbo} \\
    &= \E_{q_{\phi}(\vv{z}|\vv{x})}\left[ \log p_{\theta}(\vv{x}|\vv{z})\right] - \KLD\left[q_{\phi}(\vv{z}|\vv{x})\|p_{\lambda}(\vv{z})\right] \numberthis\label{eqn:elbo-1}
\end{align*}
where $q_{\phi}(\vv{z}|\vv{x})$ is the variational posterior to approximate $p_{\theta}(\vv{z}|\vv{x})$, derived by a neural network with parameter $\phi$.
\eqnref{eqn:elbo-1} is one decomposition of \eqref{eqn:elbo}, where the first term is the \textit{reconstruction loss} of $\vv{x}$, and the second term is the Kullback Leibler (KL) divergence between $q_{\phi}(\vv{z}|\vv{x})$ and $p_{\lambda}(\vv{z})$.

Optimizing $q_{\phi}(\vv{z}|\vv{x})$ and $p_{\theta}(\vv{x}|\vv{z})$ \wrt{} empirical distribution $p^{\star}(\vv{x})$ can be achieved by maximizing ``the expected ELBO \wrt{} the empirical distribution $p^{\star}(\vv{x})$'' (denoted by \ELBOTerm{} for short hereafter):
\begin{equation}
	\mathcal{L}(\lambda,\theta,\phi) = \E_{p^{\star}(\vv{x})}[\mathcal{L}(\vv{x};\lambda,\theta,\phi)]= \E_{p^{\star}(\vv{x})}\,\E_{q_{\phi}(\vv{z}|\vv{x})} \left[\log p_{\theta}(\vv{x}|\vv{z}) + \log p_{\lambda}(\vv{z}) - \log q_{\phi}(\vv{z}|\vv{x}) \right] \label{eqn:elbo-sum}
\end{equation}
\subsection{Real NVP}

Real NVP~\citep{dinhDensityEstimationUsing2016} (\textit{\RealNVP{}} for short hereafter) is also a deep probabilistic model, and
we denote its observed variable by $\vv{z}$ and latent variable by $\vv{w}$, with marginal distribution $p_{\lambda}(\vv{z})$ and prior $p_{\xi}(\vv{w})$. 
RNVP relates $\vv{z}$ and $\vv{w}$ by an invertible mapping $\vv{w} = f_{\lambda}(\vv{z})$, instead of a conditional distribution in VAE.
Given the invertibility of $f_{\lambda}$, we have:
\begin{equation}
	p_{\lambda}(\vv{z}) = p_{\xi}(\vv{w}) \,\AbsDet{\frac{\partial f_{\lambda}(\vv{z})}{\partial\vv{z}}}, \quad \vv{z} = f^{-1}_{\lambda}(\vv{w}) \label{eqn:real-nvp-px}
\end{equation}
where $\Det{\partial f_{\lambda}(\vv{z}) / \partial\vv{z}}$ is the Jacobian determinant of $f_{\lambda}$.
In \RealNVP{}, $f_{\lambda}$ is composed of $K$ invertible mappings, where $f_{\lambda}(\vv{z}) = (f_K \circ \dots \circ f_1)(\vv{z})$, and each $f_k$ is invertible.
$f_k$ must be carefully designed to ensure that the determinant can be computed efficiently.
The original paper of \RealNVP{} introduced the \textit{affine coupling layer} as $f_k$.
\citet{kingmaGlowGenerativeFlow2018} further introduced \textit{actnorm} and \textit{invertible 1x1 convolution}.
Details can be found in their respective papers.

\section{Learning the prior with \RealNVP{}}
%

It is straightforward to obtain a rich prior $p_{\lambda}(\vv{z})$ from a simple (\ie{}, with constant parameters) one with \RealNVP{}.
Denote the simple prior as $p_{\xi}(\vv{w})$, while the \RealNVP{} mapping as $\vv{w} = f_{\lambda}(\vv{z})$.
We then obtain \cref{eqn:real-nvp-px} as our prior $p_{\lambda}(\vv{z})$.
Substitute \eqnref{eqn:real-nvp-px} into \eqref{eqn:elbo-sum}, we get to the training objective: 
\begin{equation}
	\mathcal{L}(\lambda,\theta,\phi) = 
		\E_{p^{\star}(\vv{x})}\,\E_{q_{\phi}(\vv{z}|\vv{x})}\left[ \log p_{\theta}(\vv{x}|\vv{z}) + \log p_{\xi}(f_{\lambda}(\vv{z})) + \log  \AbsDet{\frac{\partial f_{\lambda}(\vv{z})}{\partial\vv{z}}} - \log q_{\phi}(\vv{z}|\vv{x}) \right] \label{eqn:elbo-rnvp-pz}
\end{equation}

We mainly use \textbf{joint training}~\citep{tomczakVAEVampPrior2018,bauerResampledPriorsVariational2018} (where $p_{\lambda}(\vv{z})$ is jointly trained along with $q_{\phi}(\vv{z}|\vv{x})$ and $p_{\theta}(\vv{x}|\vv{z})$ by directly maximizing \eqnref{eqn:elbo-rnvp-pz}) in our experiments, but we also consider the other two training strategies: \textbf{1) post-hoc training}~\citep{bauerResampledPriorsVariational2018}, where $p_{\lambda}(\vv{z})$ is optimized to match $q_{\phi}(\vv{z})$ of a standard, pre-trained VAE; and \textbf{2) iterative training} (proposed by us), where we alternate \textit{between} training $p_{\theta}(\vv{x}|\vv{z})$ \& $q_{\phi}(\vv{z}|\vv{x})$ \textit{and} training $p_{\lambda}(\vv{z})$, for multiple iterations.
More details can be found in \cref{sec:appendix-training-evaluation}.

\section{The necessity of learning the prior}
The \AggregatedPosteriorI{}, defined as $q_{\phi}(\vv{z}) = \int_{\mathcal{Z}} q_{\phi}(\vv{z}|\vv{x})\,p^{\star}(\vv{x})\,\mathrm{d}\vv{z}$, should be equal to $p_{\lambda}(\vv{z})$, if VAE is \textit{perfectly trained}, \ie{}, $q_{\phi}(\vv{z}|\vv{x}) \equiv p_{\theta}(\vv{z}|\vv{x})$ and $p_{\theta}(\vv{x}) \equiv p^{\star}(\vv{x})$~\citep{hoffmanElboSurgeryAnother2016}.
However, \citet{roscaDistributionMatchingVariational2018} showed that even with powerful posteriors, the \AggregatedPosteriorI{} may still  not match a unit Gaussian prior, which, we argue, is a practical limitation of neural networks and the existing optimization techniques. To better match the prior and \AggregatedPosteriorI{},
\citet{hoffmanElboSurgeryAnother2016} suggested a decomposition of \ELBOTerm{} (\eqnref{eqn:elbo-sum}):
\begin{equation}
	\mathcal{L}(\lambda,\theta,\phi) =
		\underbrace{\E_{p^{\star}(\vv{x})}\,\E_{q_{\phi}(\vv{z}|\vv{x})}\left[ \log p_{\theta}(\vv{x}|\vv{z})\right]}_{\UnderBraceNumber{1}} - \underbrace{\KLD\left[q_{\phi}(\vv{z})\|p_{\lambda}(\vv{z})\right]}_{\UnderBraceNumber{2}} - \underbrace{\MI_{\phi}[Z;X]}_{\UnderBraceNumber{3}} \label{eqn:elbo-2}
\end{equation}
where \UnderBraceNumber{3} is the \textit{mutual information}, defined as $\MI_{\phi}[Z;X] = \iint q_{\phi}(\vv{z},\vv{x}) \log \frac{q_{\phi}(\vv{z},\vv{x})}{q_{\phi}(\vv{z})\,p^{\star}(\vv{x})} \dd{\vv{z}}\,\dd{\vv{x}}$.
Since $p_{\lambda}(\vv{z})$ is only included in \UnderBraceNumber{2}, \ELBOTerm{} can be further enlarged if $p_{\lambda}(\vv{z})$ is trained to match $q_{\phi}(\vv{z})$.
However, neither the existence of a better $p_{\lambda}(\vv{z})$, nor the necessity of learning the $p_{\lambda}(\vv{z})$ for reaching the extremum of \ELBOTerm{}, has been proved under the condition that $q_{\phi}(\vv{z})$ does not match $\mathcal{N}(\vv{0},\vv{I})$.
In the following, we shall give the proof, and discuss why $q_{\phi}(\vv{z})$ could not match $\mathcal{N}(\vv{0},\vv{I})$ in practice.
\begin{proposition}\label{proposition-1}
	For VAE with flow prior $p_{\lambda}(\vv{z}) = p_{\mathcal{N}}(f_{\lambda}(\vv{z}))\AbsDet{\partial f_{\lambda}/\partial \vv{z}}$, where $p_{\mathcal{N}}(f_{\lambda}(\vv{z})) = \mathcal{N}(\vv{0},\vv{I})$, if $q_{\phi}(\vv{z}) \neq \mathcal{N}(\vv{0},\vv{I})$, then its training objective (\ELBOTerm{}):
\begin{equation*}
	\mathcal{L}(\lambda,\theta,\phi) = \E_{p^{\star}(\vv{x})}\,\E_{q_{\phi}(\vv{z}|\vv{x})}\left[ \log p_{\theta}(\vv{x}|\vv{z}) + \log p_{\mathcal{N}}(f_{\lambda}(\vv{z})) + \log  \AbsDet{\frac{\partial f_{\lambda}(\vv{z})}{\partial\vv{z}}} - \log q_{\phi}(\vv{z}|\vv{x}) \right]
\end{equation*}
can reach its extremum only if $f_{\lambda} \neq f_{\lambda_0}$, where $f_{\lambda_0} = \IdentityMapping$ is the identity mapping.
Also, $\forall \theta, \phi$, if $q_{\phi}(\vv{z}) \neq \mathcal{N}(\vv{0},\vv{I})$, then there exist $f_{\lambda} \neq f_{\lambda_0}$, \st{} $\mathcal{L}(\lambda,\theta,\phi) > \mathcal{L}(\lambda_0,\theta,\phi)$.

\end{proposition}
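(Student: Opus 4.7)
The plan is to exploit the ELBO decomposition~\eqnref{eqn:elbo-2}: only term \UnderBraceNumber{2}, $\KLD[q_{\phi}(\vv{z})\|p_{\lambda}(\vv{z})]$, depends on the flow parameter $\lambda$, so maximizing $\mathcal{L}(\lambda,\theta,\phi)$ over $\lambda$ is equivalent to minimizing this KL. At the identity $f_{\lambda_0} = \IdentityMapping$ we have $p_{\lambda_0}(\vv{z}) = \mathcal{N}(\vv{0},\vv{I})$, so the KL reduces to $\KLD[q_{\phi}(\vv{z})\|\mathcal{N}(\vv{0},\vv{I})]$, which is strictly positive by Gibbs' inequality together with the hypothesis $q_{\phi}(\vv{z}) \neq \mathcal{N}(\vv{0},\vv{I})$.

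To prove the second (existential) claim, I would produce some $f_{\lambda} \neq f_{\lambda_0}$ with $\KLD[q_{\phi}(\vv{z})\|p_{\lambda}(\vv{z})] < \KLD[q_{\phi}(\vv{z})\|\mathcal{N}(\vv{0},\vv{I})]$. My preferred route is a local perturbation argument: take a smooth one-parameter family of coupling-layer parameters that equals the identity at $t=0$, compute $\frac{d}{dt}\KLD[q_\phi(\vv{z})\|p_{\lambda_t}(\vv{z})]$ via the change-of-variables identity~\eqnref{eqn:real-nvp-px}, and show this derivative is strictly negative for at least one admissible direction. Pushing \ELBOTerm{} along such a direction for a small $t>0$ then yields $\mathcal{L}(\lambda_t,\theta,\phi) > \mathcal{L}(\lambda_0,\theta,\phi)$, with $f_{\lambda_t} \neq \IdentityMapping$. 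An alternative is to invoke a universal-approximation property of \RealNVP{} flows (which can represent $q_\phi$ to arbitrary KL accuracy), skipping the perturbation analysis entirely.

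The first (``only if'') claim is then a direct consequence: at any extremum of $\mathcal{L}(\lambda,\theta,\phi)$ the flow must at least locally maximize \ELBOTerm{} in $\lambda$; since the second claim shows $\lambda_0$ is dominated by some nearby $\lambda_t$, the identity cannot be an extremum, and hence any extremum requires $f_\lambda \neq f_{\lambda_0}$.

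The hard part is the strict-inequality step. The perturbation route needs to verify that under $q_\phi \neq \mathcal{N}(\vv{0},\vv{I})$ the space of coupling-layer perturbations really does contain a direction along which the first-order change in $\KLD[q_\phi\|p_{\lambda_t}]$ is nonzero, which, although intuitively clear from the mismatch between $q_\phi$ and $\mathcal{N}(\vv{0},\vv{I})$, requires a careful expressivity argument. The universality route avoids this at the cost of importing an external result not developed in the paper.
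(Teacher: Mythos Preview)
Your proposal is correct, and the reduction via the decomposition~\eqnref{eqn:elbo-2} to the single term $\KLD[q_{\phi}(\vv{z})\|p_{\lambda}(\vv{z})]$ is a genuinely cleaner framing than the paper's. The paper does \emph{not} isolate this KL term; instead it treats the full \ELBOTerm{} as a functional $\mathcal{L}[f]$ of the flow map, writes down the Euler--Lagrange equations for stationarity at $f=\IdentityMapping$, and solves the resulting system of PDEs $-z_i\,q_{\phi}(\vv{z}) = \partial_{z_i} q_{\phi}(\vv{z})$ dimension by dimension to conclude $q_{\phi}(\vv{z}) = \mathcal{N}(\vv{0},\vv{I})$. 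From there, $q_{\phi} \neq \mathcal{N}(\vv{0},\vv{I})$ implies the first variation at $\IdentityMapping$ is nonzero, so a nearby $f_{\lambda}$ strictly improves \ELBOTerm{}. The paper also invokes a universal-approximation result (Hornik) to pass from arbitrary smooth $f$ to the parameterized family $f_{\lambda}$.

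Your perturbation route is the paper's argument in disguise: showing that some admissible coupling-layer direction has nonzero first-order effect on the KL at $\IdentityMapping$ is exactly the content of the Euler--Lagrange computation, just phrased differently. What your reduction buys is that the target of the variational problem becomes the familiar $\KLD[q_{\phi}\|p_{\lambda}]$, so the PDE step can be replaced by the observation that the unconstrained minimizer over all priors is $p_{\lambda}=q_{\phi}$ with KL value zero, strictly below the value at $\IdentityMapping$; then one only needs approximability of $q_{\phi}$ by the flow family, which is your ``universality route.'' The paper's Euler--Lagrange route is more self-contained (no external density-approximation theorem for \RealNVP{} is needed, only Hornik for the internal networks), while yours makes the logical structure more transparent at the cost of either importing that external result or re-doing the local analysis the paper carries out explicitly.
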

\begin{proof}
	See \cref{sec:appendix-proposition-1-proof}.	
\end{proof}
Since $\mathcal{L}(\lambda_0,\theta,\phi)$ is exactly the training objective for a standard VAE with unit Gaussian prior, we conclude that when $q_{\phi}(\vv{z}) \neq \mathcal{N}(\vv{0},\vv{I})$, learning the prior is necessary, and there always exists a $f_{\lambda}$ that gives us a higher \textit{elbo} than just using a unit Gaussian prior.
 
To analyze why $q_{\phi}(\vv{z})$ does not match the unit Gaussian prior, we start with the following proposition:
\begin{proposition}\label{proposition-2}
	Given a finite number of discrete training data, \ie{}, $p^{\star}(\vv{x}) = \frac{1}{N} \sum_{i=1}^N \delta(\vv{x}-\vv{x}^{(i)})$, if $p_{\theta}(\vv{x}|\vv{z}) = \mathrm{Bernoulli}(\vv{\mu}_{\theta}(\vv{z}))$, where the Bernoulli mean $\vv{\mu}_{\theta}(\vv{z})$ is produced by the decoder, and $0 < \mu^k_{\theta}(\vv{z}) < 1$ for each of its $k$-th dimensional output,
	then 
	the optimal decoder $\vv{\mu}_{\theta}(\vv{z})$ is:
	\begin{equation}
		\vv{\mu}_{\theta}(\vv{z}) = \sum_i w_i(\vv{z})\,\vv{x}^{(i)}, \quad \text{where} \;\,
		w_i(\vv{z}) = \frac{q_{\phi}(\vv{z}|\vv{x}^{(i)})}{\sum_j q_{\phi}(\vv{z}|\vv{x}^{(j)})} \;\, \text{and}\;\, \sum_i w_i(\vv{z}) = 1 \label{eqn:appendix-bernoulli-px-given-z-extermum}
	\end{equation}
\end{proposition}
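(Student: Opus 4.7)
The plan is to isolate the only term of the ELBO that depends on the decoder $\vv{\mu}_{\theta}$, namely the reconstruction loss \UnderBraceNumber{1} in \eqnref{eqn:elbo-2}, and perform a pointwise maximization over the function value $\vv{\mu}_{\theta}(\vv{z})$ at each $\vv{z}$. Since the KL and mutual-information terms in the decompositions of \ELBOTerm{} depend only on $q_{\phi}$ and $p_{\lambda}$, the decoder's contribution to $\mathcal{L}(\lambda,\theta,\phi)$ reduces to $\frac{1}{N}\sum_i \int q_{\phi}(\vv{z}|\vv{x}^{(i)}) \log p_{\theta}(\vv{x}^{(i)}|\vv{z})\,\dd{\vv{z}}$ once we plug in $p^{\star}(\vv{x})=\frac{1}{N}\sum_i\delta(\vv{x}-\vv{x}^{(i)})$.

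Next I would substitute the Bernoulli log-likelihood $\log p_{\theta}(\vv{x}^{(i)}|\vv{z}) = \sum_k \bigl[x_k^{(i)}\log\mu_{\theta}^k(\vv{z}) + (1-x_k^{(i)})\log(1-\mu_{\theta}^k(\vv{z}))\bigr]$, swap the finite sum over $i$ with the integral over $\vv{z}$, and observe that for each $\vv{z}$ and each coordinate $k$ the integrand is a separate function of the single scalar $\mu_{\theta}^k(\vv{z})\in(0,1)$. Writing $a_k(\vv{z}) = \sum_i q_{\phi}(\vv{z}|\vv{x}^{(i)})\,x_k^{(i)}$ and $b_k(\vv{z}) = \sum_i q_{\phi}(\vv{z}|\vv{x}^{(i)})(1-x_k^{(i)})$, the pointwise objective is $a_k(\vv{z})\log\mu + b_k(\vv{z})\log(1-\mu)$, which is strictly concave on $(0,1)$ and is maximized at $\mu = a_k(\vv{z})/(a_k(\vv{z})+b_k(\vv{z}))$. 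Because $a_k(\vv{z})+b_k(\vv{z})=\sum_j q_{\phi}(\vv{z}|\vv{x}^{(j)})$, this gives exactly $\mu_{\theta}^k(\vv{z}) = \sum_i w_i(\vv{z})\,x_k^{(i)}$ with $w_i(\vv{z})$ as claimed, and the identity $\sum_i w_i(\vv{z}) = 1$ is immediate from the definition.

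The main conceptual step, and the one most worth spelling out, is the pointwise optimization: treating $\vv{\mu}_{\theta}$ as an unconstrained measurable function of $\vv{z}$ rather than as the output of a fixed neural network. This is justified because any pointwise maximizer dominates any constrained maximizer, so the proposition should be read as an upper-bound characterization of what the decoder is being driven toward (a statement the paper presumably uses later to argue that reconstructions are convex combinations of training points weighted by the posterior responsibilities). The remaining steps, Fubini to swap sum and integral, and the one-variable calculus giving $a/(a+b)$, are routine; the strict positivity assumption $0<\mu_{\theta}^k(\vv{z})<1$ is exactly what keeps the optimum in the interior so that the first-order condition is both necessary and sufficient.
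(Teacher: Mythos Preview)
Your argument is correct and reaches the same optimality condition as the paper, but the route differs. The paper first reparameterizes the bounded mean by its logit, $t^k(\vv{z}) = \log\mu_{\theta}^k(\vv{z}) - \log(1-\mu_{\theta}^k(\vv{z}))$, rewrites $\mathcal{L}$ as a functional of the unconstrained $\vv{t}$, and invokes the Euler--Lagrange equation $\partial F/\partial t^k = 0$ from the calculus of variations to obtain $\mu_{\theta}^k(\vv{z}) = \sum_i q_{\phi}(\vv{z}|\vv{x}^{(i)})\,x_k^{(i)} \big/ \sum_j q_{\phi}(\vv{z}|\vv{x}^{(j)})$. You instead optimize the bounded parameter directly and pointwise, exploiting that $a\log\mu + b\log(1-\mu)$ is strictly concave on $(0,1)$ with unique interior maximizer $a/(a+b)$. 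Your approach is more elementary (no variational machinery or reparameterization needed) and has the added benefit of giving sufficiency for free via concavity, whereas the paper's Euler equation yields only a necessary condition for an extremum. The paper's logit substitution is really there to make the domain unconstrained so that the standard Euler equation applies; you sidestep that issue entirely.
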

\begin{proof}
	See \cref{sec:appendix-bernoulli-px-given-z-extermum}.
\end{proof}

\cref{proposition-2} suggests that if $q_{\phi}(\vv{z}|\vv{x})$ for different $\vv{x}$ overlap, then even at the center of $q_{\phi}(\vv{z}|\vv{x}^{(i)})$ of one training point $\vv{x}^{(i)}$, the optimal decoder will be an \textit{average} of both $\vv{x}^{(i)}$ and other training points $\vv{x}^{(j)}$, weighted by $w_i(\vv{z})$ and $w_j(\vv{z})$.
However, \citet{rezendeTamingVAEs2018} has shown that weighted average like this is likely to cause poor \textit{reconstruction loss} (\UnderBraceNumber{1} in \eqnref{eqn:elbo-2}) and ``blurry reconstruction''.
One direction to optimize \cref{eqn:elbo-2} is to enlarge \UnderBraceNumber{1}, and to achieve this, it is a crucial goal to reduce the weight $w_j(\vv{z}), j \neq i$ for $\vv{z}$ near the center of every $q_{\phi}(\vv{z}|\vv{x}^{(i)})$.

In order to achieve this goal for Gaussian posterior, one way is to reduce the standard deviations (std) of $q_{\phi}(\vv{z}|\vv{x})$ for all $\vv{x}$.
But smaller stds for all $\vv{x}$ is likely to cause \AggregatedPosteriorI{} $q_{\phi}(\vv{z})$ to become more dissimilar with unit Gaussian prior, \ie{}, enlarging \UnderBraceNumber{2} in \eqnref{eqn:elbo-2}.
Also, there is trade-off between \textit{reconstruction loss} and \textit{mutual information}, see \cref{sec:appendix-reconstruction-loss-and-mutual-information}.
From above analysis, we can see that, at least for Gaussian posterior, \textbf{there is a trade-off between \UnderBraceNumber{1} and (\UnderBraceNumber{2} + \UnderBraceNumber{3}) in \eqnref{eqn:elbo-2}.
Due to this trade-off, \UnderBraceNumber{2} is extremely hard to be optimal (\ie{}, $q_{\phi}(\vv{z}) \equiv p_{\lambda}(\vv{z})$).
We think this is one important reason why $q_{\phi}(\vv{z})$ cannot match $p_{\lambda}(\vv{z})$ in practice.}

$p_{\lambda}(\vv{z})$ appears only in \UnderBraceNumber{2}.
With learned prior, the influence of \UnderBraceNumber{2} on the training objective (\cref{eqn:elbo-2}) is much smaller, thus the trade-off seems to occur mainly between \UnderBraceNumber{1} and \UnderBraceNumber{3}.
Since the numerical values of \UnderBraceNumber{1} is typically much larger than \UnderBraceNumber{3} in practice, the new trade-off is likely to cause \UnderBraceNumber{1} to increase.
Thus, we propose the hypothesis that a learned prior may improve \textit{reconstruction loss}.

Although learning the prior can reduce \UnderBraceNumber{2}, a learned prior does not necessarily make KL divergence smaller (as opposed to what was previously implied by~\citet{bauerResampledPriorsVariational2018}), since the KL divergence $\KLTermM = \UnderBraceNumber{2} + \UnderBraceNumber{3}$ is also affected by the \textit{mutual information} (\UnderBraceNumber{3}).

\citet{rezendeTamingVAEs2018} has proved when $p_{\theta}(\vv{x}|\vv{z}) = \mathcal{N}(\vv{\mu}_{\theta}(\vv{z}),\sigma^2 \vv{I})$, where $\sigma$ is a fixed constant, the optimal decoder is also $\vv{\mu}_{\theta}(\vv{z}) = \sum_i w_i(\vv{z})\,\vv{x}^{(i)}$, thus our analysis naturally holds in such situation.

For non-Gaussian posteriors, as long as $q_{\phi}(\vv{z}|\vv{x})$ is defined on the whole $\mathbb{R}^n$ (\eg{}, flow posteriors derived by applying continuous mappings on $\mathcal{N}(\vv{0},\vv{I})$), it is possible that $w_j(\vv{z})$ for $\vv{z}$ near the center of $q_{\phi}(\vv{z}|\vv{x}^{(i)})$ is not small enough, causing ``blurry reconstruction''.
Learning the prior may help optimize such posteriors to produce a better $w_j(\vv{z})$.
We also suspect this problem may occur with other element-wise $p_{\theta}(\vv{x}|\vv{z})$.
These concerns highlight the necessity of learning the prior.

\section{Experiments}
\begin{table}[t]
	\centering
	\caption{Average test NLL (lower is better) of different models, with Gaussian prior \& Gaussian posterior (``standard''), Gaussian prior \& \RealNVP{} posterior (``RNVP $q(z|x)$''), and \RealNVP{} prior \& Gaussian posterior (``RNVP $p(z)$'').  The flow depth $K$ is 20 for \RealNVP{} priors and posteriors.}\label{tbl:main-experiments-nll-mean-only}
	\small
	\bgroup\setlength{\tabcolsep}{.44em}
\begin{tabular}{cccccccccc}
  \toprule & \multicolumn{3}{c}{\tablehr{\DenseVAE{}}}  & \multicolumn{3}{c}{\tablehr{\ResnetVAE{}}}  & \multicolumn{3}{c}{\tablehr{\PixelVAE{}}}  \\
 \cmidrule(lr){2-4}  \cmidrule(lr){5-7}  \cmidrule(lr){8-10}   \tablehr{Datasets} & standard & \parbox[c]{3em}{\centering RNVP \\ \(q(z|x)\)} & \parbox[c]{3em}{\centering RNVP \\ \(p(z)\)} & standard & \parbox[c]{3em}{\centering RNVP \\ \(q(z|x)\)} & \parbox[c]{3em}{\centering RNVP \\ \(p(z)\)} & standard & \parbox[c]{3em}{\centering RNVP \\ \(q(z|x)\)} & \parbox[c]{3em}{\centering RNVP \\ \(p(z)\)} \\
  \midrule
    \StaticMNIST{}
 &
88.84 &
86.07 &
\textbf{84.87} &
82.95 &
80.97 &
\textbf{79.99} &
79.47 &
79.09 &
\textbf{78.92}    \\
    \MNIST{}
 &
84.48 &
82.53 &
\textbf{80.43} &
81.07 &
79.53 &
\textbf{78.58} &
78.64 &
78.41 &
\textbf{78.15}    \\
    \FashionMNIST{}
 &
228.60 &
227.79 &
\textbf{226.11} &
226.17 &
225.02 &
\textbf{224.09} &
224.22 &
223.81 &
\textbf{223.40}    \\
    \Omniglot{}
 &
106.42 &
102.97 &
\textbf{102.19} &
96.99 &
94.30 &
\textbf{93.61} &
89.83 &
89.69 &
\textbf{89.61}    \\
\bottomrule
\end{tabular}        \egroup
\end{table}

\begin{table}[t]
	\centering
	\caption{Average test NLL of \ResnetVAE{} + \RealNVP{} prior with different flow depth $K$}\label{tbl:flow-depth-nll-mean-only}
		\small
    \bgroup\setlength{\tabcolsep}{0.85em}
\begin{tabular}{ccccccccc}
  \toprule
  & \multicolumn{8}{c}{\tablehr{Flow depth $K$ for \RealNVP{} prior}} \\
  \cmidrule(lr){2-9}
  \tablehr{Datasets}  & 0  & 1  & 2  & 5  & 10  & 20  & 30  & 50  \\
  \midrule
    \StaticMNIST{}
      & 82.95      & 81.76      & 81.30      & 80.64      & 80.26      & 79.99      & 79.90      & \textbf{79.84}    \\
    \MNIST{}
      & 81.07      & 80.02      & 79.58      & 79.09      & 78.75      & 78.58      & 78.52      & \textbf{78.49}    \\
    \FashionMNIST{}
      & 226.17      & 225.27      & 224.78      & 224.37      & 224.18      & 224.09      & \textbf{224.07}      & \textbf{224.07}    \\
    \Omniglot{}
      & 96.99      & 96.20      & 95.35      & 94.47      & 93.92      & 93.61      & 93.53      & \textbf{93.52}    \\
  \bottomrule
\end{tabular}
        \egroup
\end{table}

\begin{table}[t]
	\centering
	\begin{minipage}[t]{.48\linewidth}
		\centering
		\caption{Average test NLL of \ResnetVAE{}, with \RealNVP{} posterior (``RNVP $q(z|x)$''), \RealNVP{} prior (``RNVP $p(z)$''), and both \RealNVP{} prior \& posterior (``both'').  Flow depth $K=20$.}\label{tbl:dual-rnvp-nll}
		\small
		\bgroup\setlength{\tabcolsep}{.75em}
\begin{tabular}{cccc}
  \toprule
  & \multicolumn{3}{c}{\tablehr{\ResnetVAE}} \\
  \cmidrule(lr){2-4}
  \tablehr{Datasets}  & \parbox[c]{3em}{RNVP \\ \centering \(q(z|x)\)}  & \parbox[c]{3em}{RNVP \\ \centering \(p(z)\)}  & both  \\
  \midrule
    \StaticMNIST{} 
 &
80.97 &
79.99 &
\textbf{79.87}    \\
    \MNIST{} 
 &
79.53 &
78.58 &
\textbf{78.56}    \\
    \FashionMNIST{} 
 &
225.02 &
224.09 &
\textbf{224.08}    \\
    \Omniglot{} 
 &
94.30 &
\textbf{93.61} &
93.68    \\
  \bottomrule
\end{tabular}
        \egroup
	\end{minipage}\hfill
	\begin{minipage}[t]{.48\linewidth}
		\centering
		\caption{Average test NLL of \ResnetVAE{}, with prior trained by: \textit{joint} training, \textit{iterative} training, \textit{post-hoc} training, and standard VAE (``none'') as reference.  Flow depth $K=20$.}\label{tbl:post-train-nll}\vspace{.5em}
		\small{
    	\bgroup\setlength{\tabcolsep}{.25em}
\begin{tabular}{ccccc}
  \toprule
  & \multicolumn{4}{c}{\tablehr{\ResnetVAE}} \\
  \cmidrule(lr){2-5}
  \tablehr{Datasets}  & joint  & iterative  & post-hoc  & none  \\
  \midrule
    \StaticMNIST{} 
 &
\textbf{79.99} &
80.63 &
80.86 &
82.95    \\
    \MNIST{} 
 &
\textbf{78.58} &
79.61 &
79.90 &
81.07    \\
    \FashionMNIST{} 
 &
\textbf{224.09} &
224.88 &
225.22 &
226.17    \\
    \Omniglot{} 
 &
\textbf{93.61} &
94.43 &
94.87 &
96.99    \\
  \bottomrule
\end{tabular}
        \egroup
    	}
	\end{minipage}
\end{table}

\begin{table}[t]
	\centering
	\begin{minipage}[t]{.48\linewidth}
		\flushleft
		\caption{Test NLL on \StaticMNIST{}. ``$^\TwoZMark$'' indicates a hierarchical model with 2 latent variables, while ``$^\ThreeOrMoreZMark$'' indicates at least 3 latent variables.}\label{tbl:static-mnist-nll-comparison}
		\small
		
\begin{tabularx}{\linewidth}{Xr}
  \toprule
  Model & NLL \\
    \midrule
\multicolumn{2}{l}{\textit{Models without PixelCNN decoder}} \\    ConvHVAE + Lars prior$^\TwoZMark$~[\citenum{bauerResampledPriorsVariational2018}] & 81.70 \\
    ConvHVAE + VampPrior$^\TwoZMark$~[\citenum{tomczakVAEVampPrior2018}] & 81.09 \\
    VAE + IAF$^\ThreeOrMoreZMark$~[\citenum{kingmaImprovedVariationalInference2016}] & 79.88 \\
    BIVA$^\ThreeOrMoreZMark$~[\citenum{maaloeBIVAVeryDeep2019}] & \textbf{78.59} \\
    \textit{Our \ConvVAE{} + RNVP $p(z)$, $K=50$} & 80.09 \\
    \textit{Our \ResnetVAE{} + RNVP $p(z)$, $K=50$} & 79.84 \\
    \midrule
\multicolumn{2}{l}{\textit{Models with PixelCNN decoder}} \\    VLAE$^\ThreeOrMoreZMark$[\citenum{chenVariationalLossyAutoencoder2016}] & 79.03 \\
    PixelHVAE + VampPrior$^\TwoZMark$~[\citenum{tomczakVAEVampPrior2018}] & 79.78 \\
    \textit{Our \PixelVAE{} + RNVP $p(z)$, $K=50$} & \textbf{79.01} \\
  \bottomrule
\end{tabularx}
        
	\end{minipage}\hfill
	\begin{minipage}[t]{.48\linewidth}
		\flushright
		\caption{Test NLL on \MNIST{}.  ``$^\TwoZMark$'' and ``$^\ThreeOrMoreZMark$'' has the same meaning as \cref{tbl:static-mnist-nll-comparison}.}\label{tbl:mnist-nll-comparison}
		\small
		
\begin{tabularx}{\linewidth}{Xr}
  \toprule
  Model & NLL \\
    \midrule
\multicolumn{2}{l}{\textit{Models without PixelCNN decoder}} \\    ConvHVAE + Lars prior$^\TwoZMark$~[\citenum{bauerResampledPriorsVariational2018}] & 80.30 \\
    ConvHVAE + VampPrior$^\TwoZMark$~[\citenum{tomczakVAEVampPrior2018}] & 79.75 \\
    VAE + IAF$^\ThreeOrMoreZMark$~[\citenum{kingmaImprovedVariationalInference2016}] & 79.10 \\
    BIVA$^\ThreeOrMoreZMark$~[\citenum{maaloeBIVAVeryDeep2019}] & \textbf{78.41} \\
    \textit{Our \ConvVAE{} + RNVP $p(z)$, $K=50$} & 78.61 \\
    \textit{Our \ResnetVAE{} + RNVP $p(z)$, $K=50$} & 78.49 \\
    \midrule
\multicolumn{2}{l}{\textit{Models with PixelCNN decoder}} \\    VLAE$^\ThreeOrMoreZMark$~[\citenum{chenVariationalLossyAutoencoder2016}] & 78.53 \\
    PixelVAE$^\TwoZMark$~[\citenum{gulrajaniPixelvaeLatentVariable2016}] & 79.02 \\
    PixelHVAE + VampPrior$^\TwoZMark$~[\citenum{tomczakVAEVampPrior2018}] & 78.45 \\
    \textit{Our \PixelVAE{} + RNVP $p(z)$, $K=50$} & \textbf{78.12} \\
  \bottomrule
\end{tabularx}
        
	\end{minipage}
\end{table}

\begin{figure}[t]
	\centering
	\begin{minipage}[b]{.42\textwidth}
		\centering
		\includegraphics[width=0.98\linewidth]{figures/main_samples_3}
		\caption{Sample means from $p_{\lambda}(\vv{z})$ of \ResnetVAE{} with: (left) unit Gaussian prior; (right) \RealNVP{} prior. \TheLastColumnSentence{6x6}}\label{fig:main-samples}
	\end{minipage}\hfill
	\begin{minipage}[b]{.54\textwidth}
		\centering
		\includegraphics[width=0.98\linewidth]{figures/interpolation}
		\caption{Interpolations of $\vv{z}$ from \ResnetVAE{}, between the centers of $q_{\phi}(\vv{z}|\vv{x})$ of two training points, and heatmaps of $\log p_{\lambda}(\vv{z})$.  The left-most and right-most columns are the original training points.}\label{fig:z-interpolation}
	\end{minipage}

\end{figure}

\begin{table}[t]
	\centering
	\caption{Average test \ELBOTerm{}, \textit{reconstruction loss} (``\textit{recons}''), \KLTerm{} (``\textit{kl}''), and \KLzxTerm{} (``$kl_{z|x}$'') of \ResnetVAE{} with different priors.}\label{tbl:elbo-recons-kl-ResnetVAE}
	\small
	\bgroup\setlength{\tabcolsep}{.65em}
    \begin{tabular}{ccccccccc}
    \toprule
& \multicolumn{4}{c}{\tablehr{standard}}& \multicolumn{4}{c}{\tablehr{RNVP \(p(z)\)}} \\
\cmidrule(lr){2-5}\cmidrule(lr){6-9}    \tablehr{Datasets} & \textit{lb}& \textit{recons}& \textit{kl}& $kl_{z|x}$ & \textit{lb}& \textit{recons}& \textit{kl}& $kl_{z|x}$ \\
    \midrule
        \StaticMNIST{}
 & 
-87.61 & 
-60.09 & 
\textbf{27.52} & 
4.67 & 
\textbf{-82.85} & 
\textbf{-54.32} & 
28.54 & 
\textbf{2.87}        \\
        \MNIST{}
 & 
-84.62 & 
-58.70 & 
\textbf{25.92} & 
3.55 & 
\textbf{-80.34} & 
\textbf{-53.64} & 
26.70 & 
\textbf{1.76}        \\
        \FashionMNIST{}
 & 
-228.91 & 
-208.94 & 
\textbf{19.96} & 
2.74 & 
\textbf{-225.97} & 
\textbf{-204.66} & 
21.31 & 
\textbf{1.88}        \\
        \Omniglot{}
 & 
-104.87 & 
-66.98 & 
\textbf{37.89} & 
7.88 & 
\textbf{-99.60} & 
\textbf{-61.21} & 
38.39 & 
\textbf{5.99}        \\
    \bottomrule
    \end{tabular}
            \egroup
\end{table}

\begin{table}[t]
	\begin{minipage}{.36\linewidth}
		\centering
		\caption{Avg. number of \textit{active units} of \ResnetVAE{} with different priors.}\label{tbl:active-units-ResnetVAE} 
		\small
		\bgroup\setlength{\tabcolsep}{.32em}
\begin{tabular}{ccc}
  \toprule
  & \multicolumn{2}{c}{\tablehr{\ResnetVAE}} \\
  \cmidrule(lr){2-3}
  \tablehr{Datasets}  & standard  & \parbox[c]{3em}{RNVP \\ \centering \(p(z)\)}  \\
  \midrule
    \StaticMNIST{} 
 &
30 &
\textbf{40}    \\
    \MNIST{} 
 &
25.3 &
\textbf{40}    \\
    \FashionMNIST{} 
 &
27 &
\textbf{64}    \\
    \Omniglot{} 
 &
59.3 &
\textbf{64}    \\
  \bottomrule
\end{tabular}
        \egroup
	\end{minipage}\hfill%
	\begin{minipage}{.62\linewidth}
		\centering
		\caption{Avg. \textit{reconstruction loss}, \KLTerm{} (``\textit{kl}'') and \textit{active units} (``\textit{au}'') of \ResnetVAE{} with \textit{iteratively trained} and \textit{post-hoc} trained \RealNVP{} priors.}\label{tbl:recons-kl-au-post-train-ResnetVAE}
		\small
		\bgroup\setlength{\tabcolsep}{.45em}
\begin{tabular}{ccccccc}
  \toprule
     & \multicolumn{3}{c}{iterative} & \multicolumn{3}{c}{post-hoc} \\
    \cmidrule(r){2-4} \cmidrule(r){5-7}
    Datasets &  \textit{recons}      &  \textit{kl}   &   \textit{au}
    &  \textit{recons}      &  \textit{kl}   &   \textit{au} \\
    \midrule
    StaticMNIST 
 &
        \textbf{-58.0}
 &
        26.4
 &
        \textbf{38.7}
 &
        -60.1
 &
        \textbf{25.3}
 &
        30
    \\
    DynamicMNIST 
 &
        \textbf{-57.2}
 &
        25.1
 &
        \textbf{40}
 &
        -58.7
 &
        \textbf{24.7}
 &
        25.3
    \\
    FashionMNIST 
 &
        \textbf{-207.8}
 &
        19.4
 &
        \textbf{64}
 &
        -208.9
 &
        \textbf{19.0}
 &
        27
    \\
    Omniglot 
 &
        \textbf{-63.3}
 &
        37.9
 &
        \textbf{64}
 &
        -67.0
 &
        \textbf{35.8}
 &
        59.3
    \\
  \bottomrule
\end{tabular}
        \egroup
	\end{minipage}
\end{table}

\begin{figure}[t]
	\centering
	\includegraphics[width=\linewidth]{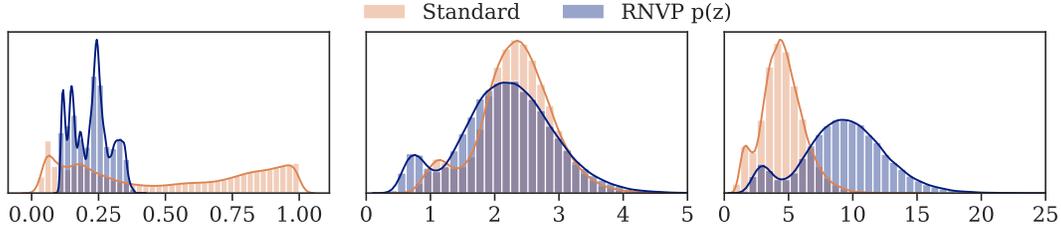}
	\caption{Histograms of: (left) per-dimensional stds of $q_{\phi}(\vv{z}|\vv{x})$; (middle) distances between closest pairs of $q_{\phi}(\vv{z}|\vv{x})$; and (right) \textit{normalized distances}.  See \cref{sec:appendix-closest-pairs-formulation} for formulation.}\label{fig:distance-std}
\end{figure}


\subsection{Setup}

\paragraph{Datasets} We use four datasets in our experiments: statically and dynamically binarized MNIST~\citep{larochelleNeuralAutoregressiveDistribution2011,salakhutdinovQuantitativeAnalysisDeep2008}, FashionMNIST~\citep{xiaoFashionMNISTNovelImage2017} and Omniglot~\citep{lakeHumanlevelConceptLearning2015}.
Details of these datasets can be found in \cref{sec:appendix-datasets}.

\myparagraph{Models} We perform systematically controlled experiments, using the following VAE variants: (1) \textbf{\DenseVAE{}}, whose encoder and decoder are composed of dense layers; (2) \textbf{\ConvVAE{}}, with convolutional layers; (3) \textbf{\ResnetVAE{}}, with ResNet layers~\citep{zagoruykoWideResidualNetworks2016}; and (4) \textbf{\PixelVAE{}}~\citep{gulrajaniPixelvaeLatentVariable2016}, with several PixelCNN layers on top of the \ResnetVAE{} decoder.
For \RealNVP{} priors and posteriors, we use $K$ blocks of invertible mappings ($K$ is called \textit{flow depth} hereafter), while each block contains an \textit{invertible dense}, a dense \textit{coupling layer}, and an \textit{actnorm}~\citep{dinhDensityEstimationUsing2016,kingmaGlowGenerativeFlow2018}.
More details 
can be found in \cref{sec:appendix-network-architectures}.


\paragraph{Training and evaluation} Unless specified, all experiments are repeated for 3 times, and the metric means are reported.
We use Adam~\citep{kingmaAdamMethodStochastic2014} and adopt warm up (KL annealing)~\citep{bowmanGeneratingSentencesContinuous2016} to train all models.
We perform early-stopping using negative log-likelihood (NLL) on validation set, to prevent over-fitting on StaticMNIST and on all datasets with \PixelVAE{}.
For evaluation, we use 1,000 samples to estimate NLL and other metrics on test set, unless specified.
More details can be found in \cref{sec:appendix-training-evaluation}.

\subsection{Quantitative results}

\cref{tbl:main-experiments-nll-mean-only} shows the NLLs of \DenseVAE{}, \ResnetVAE{} and \PixelVAE{} with flow depth $K=20$, where larger $K$ are not thoroughly tested due to limited computational resources.
\ConvVAE{} can be found in \cref{tbl:main-experiments-nll-with-std} in the Appendix, which has similar trends as \ResnetVAE{}; the standard deviations can also be found in the Appendix. 
We can see that \RealNVP{} prior consistently outperforms standard VAE and \RealNVP{} posterior in test NLL, with as large improvement as about 2 nats on \ResnetVAE{}, and even larger improvement on \DenseVAE{}.
The improvement is not so significant on \PixelVAE{}, which is not surprising because \PixelVAE{} encodes less information in the latent variable~\citep{gulrajaniPixelvaeLatentVariable2016}.

\cref{tbl:flow-depth-nll-mean-only} shows the NLLs of \ResnetVAE{} with different flow depth $K$.
Even $K=1$, \RealNVP{} prior can improve NLLs by about 1 nat.
There is no over-fitting for $K$ up to 50. 
However, we do not claim learning the prior will not cause over-fitting; this only suggests \RealNVP{} prior does not over-fit easily.

\cref{tbl:dual-rnvp-nll} shows that 
using both \RealNVP{} prior and posterior shows no significant advantage over using \RealNVP{} prior only.
This, in conjunction with the results of \RealNVP{} prior and posterior from \cref{tbl:main-experiments-nll-mean-only}, highlights that learning the prior is crucial for good NLLs, supporting our statements in \cref{proposition-1}.

\cref{tbl:post-train-nll} shows the NLLs of \textit{iterative training} and \textit{post-hoc training} with \ResnetVAE{}.
Although still not comparable to \textit{joint training}, both methods can bring large improvement in NLLs over standard VAE.
Also, \textit{iterative training} even further outperforms \textit{post-hoc training} by a large margin.

In \cref{tbl:static-mnist-nll-comparison,tbl:mnist-nll-comparison}, we compare \ResnetVAE{} and \PixelVAE{} with \RealNVP{} prior to other approaches on StaticMNIST and MNIST.
The results on Omniglot and FashionMNIST have a similar trend, and can be found in \cref{tbl:appendix-omniglot-nll-comparison,tbl:appendix-fashion_mnist-nll-comparison}.
All models except ours used at least 2 latent variables.
Our \ResnetVAE{} with \RealNVP{} prior, $K=50$ is second only to BIVA among all models without PixelCNN decoder, and ranks the first among all models with PixelCNN decoder.
On MNIST, the NLL of our model is very close to BIVA, while the latter used 6 latent variables and very complicated architecture.
Although BIVA has a much lower NLL on StaticMNIST, in contrast to our paper, the BIVA paper~\citep{maaloeBIVAVeryDeep2019} did not report using validation data for early-stopping,  indicating the gap 
should mainly be attributed to having fewer training data.
Meanwhile, our \ConvVAE{} with \RealNVP{} prior, $K=50$ has lower test NLL than ConvHVAE with \textit{Lars prior} and \textit{VampPrior}.
Since \ConvVAE{} is undoubtedly a simpler architecture than ConvHVAE (which has 2 latent variables), it is likely that our improvement comes from the \RealNVP{} prior rather than the different architecture.
\textbf{\cref{tbl:static-mnist-nll-comparison,tbl:mnist-nll-comparison} show that using \RealNVP{} prior with just one latent variable, it is possible to achieve NLLs comparable to very deep state-of-the-art VAE (BIVA), ourperforming many previous works (including works on priors, and works of complicated hierarchical VAE equipped with rich posteriors like VAE + IAF).
This discovery shows that simple VAE architectures with learned prior and a small number of latent variables is a promising direction. 
}

\subsection{Qualitative results}

\cref{fig:main-samples} samples images from \ResnetVAE{} trained with different methods.
Compared to standard \ResnetVAE{}, \ResnetVAE{} with \RealNVP{} prior produces fewer digits that are hard to interpret.
\TheLastColumnSentence{6x6}
However, there are differences between the last two columns, indicating our model is not just memorizing the training data.
More samples are in \cref{sec:appendix-qualitative-results}.

\subsection{Improved reconstruction loss and other experimental results with learned prior}\label{sec:influence-of-learning-the-prior}

In this section, we will show the improved reconstruction loss and other experimental results with learned \RealNVP{} prior, which supports \cref{proposition-2}.

%
\paragraph{Better reconstruction loss, but larger KL divergence}

In \cref{tbl:elbo-recons-kl-ResnetVAE}, \ELBOTerm{} and \ReconsTerm{} ($\E_{p^{\star}(\vv{x})}\,\E_{q_{\phi}(\vv{z}|\vv{x})}\myleft[\log p_{\theta}(\vv{x}|\vv{z}\myright]$) of \ResnetVAE{} with \RealNVP{} prior are substantially higher than standard \ResnetVAE{}, just as the trend of test log-likelihood (LL) in \cref{tbl:main-experiments-nll-mean-only}.
Metrics of other models are in \cref{tbl:kl-recons-2-DenseVAE,tbl:kl-recons-2-ConvVAE,tbl:kl-recons-2-ResnetVAE,tbl:kl-recons-2-PixelVAE}.

On the contrary, \KLTerm{} are larger, while \KLzxTerm{} are smaller.
Since $q_{\phi}(\vv{z}) = \int q_{\phi}(\vv{z}|\vv{x})\,p^{\star}(\vv{x})\,\dd{\vv{z}}$ and $p_{\lambda}(\vv{z}) = \int p_{\theta}(\vv{x}|\vv{z})\,p_{\theta}(\vv{x})\,\dd{\vv{x}}$, the fact that \RealNVP{} prior has both better test LL (\ie{}, $p_{\theta}(\vv{x})$ is closer to $p^{\star}(\vv{x})$) and lower \KLzxTerm{} should suggest $q_{\phi}(\vv{z})$ is closer to $p_{\lambda}(\vv{z})$, hence lower \KLzTerm{}.
And since $\KLTermM = \KLzTermM + \MIzxTermM$ (\eqnref{eqn:elbo-2}), this should suggest a larger \MIzx{}, \ie{}, \textit{mutual information}.
All these facts are consistent with our analysis based on \cref{proposition-2}.
Note that, under suitable conditions, \ReconsTerm{} and \KLTerm{} can happen to be both smaller (\textit{e.g.}, the results of \DenseVAE{} on StaticMNIST in \cref{tbl:kl-recons-2-DenseVAE}).

\paragraph{Smaller standard deviation of Gaussian posterior with \RealNVP{} prior}

In \cref{fig:distance-std}, we plot the histograms of per-dimensional stds of $q_{\phi}(\vv{z}|\vv{x})$, as well as the distances and \textit{normalized distances} (which is roughly distance/std) between each closest pair of $q_{\phi}(\vv{z}|\vv{x})$.
Detailed formulations can be found in \cref{sec:appendix-closest-pairs-formulation}.
The std of \RealNVP{} prior are substantially smaller, while the \textit{normalized distances} are larger.
Larger \textit{normalized distances} indicate less density of $q_{\phi}(\vv{z}|\vv{x})$ to be overlapping, hence better \textit{reconstruction loss} according to \eqnref{eqn:appendix-bernoulli-px-given-z-extermum}.
This fact is a direct evidence of \cref{proposition-2}.

\paragraph{More active units}

\cref{tbl:active-units-ResnetVAE} counts the \textit{active units}~\citep{burdaImportanceWeightedAutoencoders2015} of \ResnetVAE{} with different priors, which quantifies the number of latent dimensions used for encoding information from input data.
\RealNVP{} prior can cause all units to be active, which is in sharp contrast to standard VAE.
It has long been a problem of VAE that the number of \textit{active units} is small, often attributed to the over-regularization of the unit Gaussian prior~\citep{hoffmanElboSurgeryAnother2016,tomczakVAEVampPrior2018}.
Learning the prior can be an effective cure for this problem.
\paragraph{Iterative training can lead to increased active units and improved reconstruction loss}

\cref{tbl:recons-kl-au-post-train-ResnetVAE} shows that, compared to \textit{post-hoc training}, \textit{iterative training} can lead to increased number of \textit{active units} and larger \ReconsTerm{}, but larger \KLTerm{}.
\cref{proposition-2} suggests that a larger \textit{reconstruction loss} can only be obtained with improved $p_{\lambda}(\vv{z})$.
However, the prior is in turn determined by $q_{\phi}(\vv{z}|\vv{x})$ and $p_{\theta}(\vv{x}|\vv{z})$ according to \cref{eqn:elbo-2}.
Thus, it is important to alternate \textit{between} training $p_{\theta}(\vv{x}|\vv{z})$ \& $q_{\phi}(\vv{z}|\vv{x})$ \textit{and} training $p_{\lambda}(\vv{z})$.
This is why \textit{iterative training} can result in larger \textit{reconstruction loss} than \textit{post-hoc training}.

%

\subsection{Learned prior on interpolated z and low posterior samples}

The standard deviations of $q_{\phi}(\vv{z}|\vv{x})$ are not reduced equally.
Instead, they are reduced according to the dissimilarity between neighbors.
This can result in a fruitful $p_{\lambda}(\vv{z})$, learned to score the interpolations of $\vv{z}$ between the centers of $q_{\phi}(\vv{z}|\vv{x}^{(i)})$ and $q_{\phi}(\vv{z}|\vv{x}^{(j)})$ of two training points $\vv{x}^{(i)}$ and $\vv{x}^{(j)}$. See \cref{fig:z-interpolation}.
\RealNVP{} prior learns to give low likelihoods to hard-to-interpret interpolated samples (the first three rows), in contrast to unit Gaussian prior.
However, for good quality interpolations (the last three rows), \RealNVP{} prior grants high likelihoods.
In contrast, the unit Gaussian prior assigns high likelihoods to all interpolations, even when the samples are hard to interpret.

\citet{roscaDistributionMatchingVariational2018} observed that part of the samples from unit Gaussian prior can have low $q_{\phi}(\vv{z})$ and low visual quality, and they name these samples the ``\textit{low posterior samples}''.
Unlike this paper, they tried various approaches to match $q_{\phi}(\vv{z})$ to unit Gaussian $p_{\lambda}(\vv{z})$, including adversarial training methods, but still found \textit{low posterior samples} scattering across the whole prior.
Learning the prior can avoid having high $p_{\lambda}(\vv{z})$ on low posterior samples (see \cref{sec:appendix-learned-prior-on-low-posterior-samples}).
Since we have analyzed why $q_{\phi}(\vv{z})$ cannot match $p_{\lambda}(\vv{z})$, we suggest to adopt learned prior as a cheap solution to this problem.

\section{Related work}
Learned priors, as a natural choice for the conditional priors of intermediate variables,  have long been unintentionally used in hierarchical VAEs~\citep{rezendeStochasticBackpropagationApproximate2014,sonderbyLadderVariationalAutoencoders2016,kingmaImprovedVariationalInference2016,maaloeBIVAVeryDeep2019}.
A few works were proposed to enrich the priors of VAE, \eg{}, Gaussian mixture priors~\citep{nalisnickApproximateInferenceDeep2016,dilokthanakulDeepUnsupervisedClustering2016}, Bayesian non-parametric priors~\citep{nalisnickStickBreakingVariationalAutoencoders2016,goyalNonparametricVariationalAutoencoders2017}, and auto-regressive priors~\citep{gulrajaniPixelvaeLatentVariable2016,chenVariationalLossyAutoencoder2016}, without the awareness of its relationship with the \AggregatedPosteriorI{}, until the analysis made by \citet{hoffmanElboSurgeryAnother2016}.
Since then, several attempts have been made in matching the prior to \AggregatedPosteriorI{}, by using Real NVP~\citep{huangLearnableExplicitDensity2017}, variational mixture of posteriors~\citep{tomczakVAEVampPrior2018}, learned accept/reject sampling~\citep{bauerResampledPriorsVariational2018}, and kernel density trick~\citep{takahashiVariationalAutoencoderImplicit2018}.
However, none of these works proved the necessity of learning the prior, nor did they recognize the improved reconstruction loss induced by learned prior.
Furthermore, they did not show that learned prior with just one latent variable can achieve comparable results to those of many deep hierarchical VAEs.

The trade-off between reconstruction loss and KL divergence was also discussed by \citet{rezendeTamingVAEs2018}, but instead of relieving the resistance from the prior, they proposed to convert the reconstruction loss into an optimization constraint, so as to trade for better reconstruction at the cost of larger KL (and ELBO).
Meanwhile, \citet{roscaDistributionMatchingVariational2018} demonstrated failed attempts in matching \AggregatedPosteriorI{} to a fixed prior with expressive posteriors, and observed \textit{low posterior samples} problem.
We provide analysis on why their attempts failed, prove the necessity of learning the prior, and show the \textit{low posterior samples} can also be avoided by learned prior.

\section{Conclusion}
	In this paper, for the first time in the literature, we proved the necessity and effectiveness of learning the prior in VAE when aggregated posterior does not match unit Gaussian prior, analyzed why this situation may happen, and proposed a hypothesis that learning the prior may improve reconstruction loss, all of which are supported by our extensive experiment results. Using learned Real NVP prior with just one latent variable in VAE, we managed to achieve test NLLs comparable to very deep state-of-the-art hierarchical VAE, outperforming many previous works of complex hierarchical VAEs equipped with rich priors/posteriors. Furthermore, we demonstrated that the learned prior can avoid assigning high likelihoods to low-quality interpolations on the latent space and to the recently discovered low posterior samples.

		We believe this paper is an important step towards simple VAE architectures with learned prior and a small number of latent variables, which potentially can be more scalable to large datasets than those complex VAE architectures.

\FloatBarrier

%

\newpage
\printbibliography


\newpage

\appendix

\section{Proof details}

\setcounter{figure}{0}
\renewcommand{\thefigure}{A.\arabic{figure}}
\renewcommand{\theHfigure}{AH.\thefigure}

\setcounter{table}{0}
\renewcommand{\thetable}{A.\arabic{table}}
\renewcommand{\theHtable}{AH.\thetable}

\setcounter{lemma}{0}
\renewcommand{\thelemma}{A.\arabic{lemma}}
\renewcommand{\theHlemma}{AH.\thelemma}

\setcounter{equation}{0}
\renewcommand{\theequation}{A.\arabic{equation}}
\renewcommand{\theHequation}{AH.\theequation}

\subsection{Proof for \cref{proposition-1}}
\label{sec:appendix-proposition-1-proof}
\newcommand{\Lvae}{\mathcal{L}_{\mathrm{VAE}}(\theta, \phi)}
\newcommand{\Lprior}{\mathcal{L}(\lambda, \theta, \phi)}
\newcommand{\Lpriorvae}{\mathcal{L}(\lambda_0, \theta, \phi)}

The training objective for VAE with flow prior $p_{\lambda}(\vv{z}) = p_{\mathcal{N}}(f_{\lambda}(\vv{z}))\AbsDet{\partial f_{\lambda} / \partial \vv{z}}$ is:
\begin{equation*}
	\Lprior = \E_{p^{\star}(\vv{x})}\,\E_{q_\phi(\vv{z}|\vv{x})} \left[\log p_\theta(\vv{x}|\vv{z})-\log q_\phi(\vv{z}|\vv{x})+\log\AbsDet{\frac{\partial f_\lambda}{ \partial \vv{z}}} + \log p_\mathcal{N}\left(f_\lambda(\vv{z})\right) \right]	
\end{equation*}
where $p_{\mathcal{N}}(\cdot)$ denotes unit Gaussian distribution $\mathcal{N}(\vv{0},\vv{I})$.
Meanwhile, the training objective of a standard VAE with prior $p_{\mathcal{N}}(\vv{z})$ is:
\begin{equation*}
	\Lvae = \E_{p^{\star}(\vv{x})}\,\E_{q_\phi(\vv{z}|\vv{x})} \left[\log p_\theta(\vv{x}|\vv{z})-\log q_\phi(\vv{z}|\vv{x})+\log p_\mathcal{N}(\vv{z}) \right]
\end{equation*}

To prove \cref{proposition-1}, we first introduce the following lemmas:

\begin{lemma}\label{lemma-A1}
	\begin{equation*}
		\max_{\theta,\phi} \Lvae \leq \max_{\lambda,\theta,\phi} \Lprior
	\end{equation*}
\end{lemma}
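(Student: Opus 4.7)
The plan is to observe that the standard VAE objective $\Lvae$ is exactly a special case of the flow-prior objective $\Lprior$, namely the case $f_\lambda = f_{\lambda_0} = \IdentityMapping$. Once this is noted, the lemma reduces to the trivial fact that maximizing over a larger parameter set can only increase (or preserve) the optimum.

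Concretely, I would first evaluate $\Lprior$ at $\lambda = \lambda_0$. Since $f_{\lambda_0}(\vv{z}) = \vv{z}$, the Jacobian $\partial f_{\lambda_0}/\partial\vv{z}$ is the identity matrix, so $\log \AbsDet{\partial f_{\lambda_0}/\partial\vv{z}} = 0$, and moreover $p_{\mathcal{N}}(f_{\lambda_0}(\vv{z})) = p_{\mathcal{N}}(\vv{z})$. Substituting these two facts directly into the definition of $\Lprior$ collapses the integrand to $\log p_\theta(\vv{x}|\vv{z}) - \log q_\phi(\vv{z}|\vv{x}) + \log p_{\mathcal{N}}(\vv{z})$, which is exactly the integrand of $\Lvae$. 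Hence $\Lpriorvae = \Lvae$ pointwise in $(\theta,\phi)$.

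From this identity it follows that
\begin{equation*}
    \max_{\theta,\phi}\,\Lvae \;=\; \max_{\theta,\phi}\,\Lpriorvae \;\leq\; \max_{\lambda,\theta,\phi}\,\Lprior,
\end{equation*}
where the inequality is simply because $\{(\lambda_0,\theta,\phi)\}$ is a subset of the full parameter space over which the right-hand side is optimized. This concludes the proof.

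There is essentially no hard step here: the whole argument hinges on recognizing that the unit Gaussian prior is recovered by plugging the identity flow into the flow-prior construction. The only thing worth being careful about is making the claim $\log \AbsDet{\partial f_{\lambda_0}/\partial\vv{z}} = 0$ precise (it holds because the identity map's Jacobian is $\vv{I}$ with $\Det{\vv{I}} = 1$), and noting that the max on the right is taken over a strictly larger domain that includes $\lambda_0$, so the inequality is valid even if the $\max$ is interpreted as $\sup$.
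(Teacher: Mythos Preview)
Your proposal is correct and follows essentially the same approach as the paper: both evaluate $\Lprior$ at $\lambda=\lambda_0$ with $f_{\lambda_0}=\IdentityMapping$, observe that the Jacobian term vanishes and $p_{\mathcal N}(f_{\lambda_0}(\vv z))=p_{\mathcal N}(\vv z)$ so that $\Lpriorvae=\Lvae$, and then conclude by the trivial inequality for maximizing over a larger parameter set. The only cosmetic difference is that the paper first bounds $\Lvae\le\max_\lambda\Lprior$ for each fixed $(\theta,\phi)$ before taking $\max_{\theta,\phi}$, whereas you pass directly to the joint maximum; the content is identical.
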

\begin{proof}
	Let $\lambda_0$ be the set of parameters satisfying $f_{\lambda_0} = \IdentityMapping$ (identity mapping), then $\frac{\partial f_{\lambda_0}}{\partial \vv{z}} = \vv{I}$, and:
	\begin{align*}
		\Lpriorvae &= \E_{p^{\star}(\vv{x})}\,\E_{q_\phi(\vv{z}|\vv{x})} \left[\log p_\theta(\vv{x}|\vv{z})-\log q_\phi(\vv{z}|\vv{x})+\log\AbsDet{\frac{\partial f_{\lambda_0}}{ \partial \vv{z}}} + \log p_\mathcal{N}\left(f_{\lambda_0}(\vv{z})\right) \right] \\
		&= \E_{p^{\star}(\vv{x})}\,\E_{q_\phi(\vv{z}|\vv{x})} \left[\log p_\theta(\vv{x}|\vv{z})-\log q_\phi(\vv{z}|\vv{x})+\log p_\mathcal{N}(\vv{z}) \right]
	\end{align*}
 	which means $\Lvae = \Lpriorvae$, thus we have:
 	\begin{equation*}
 		\Lvae \leq \max_{\lambda} \Lprior	
 	\end{equation*}
	Since for all $\theta$ and $\phi$, the above inequality always holds, we have:
	\begin{equation*}
		\max_{\theta,\phi} \Lvae \leq \max_{\lambda,\theta,\phi} \Lprior
	\end{equation*}
\end{proof}

\begin{lemma}\label{lemma-A2}
	For all $\theta,\phi$,
	\begin{equation*}
		\Lvae = \max_{\lambda} \Lprior
	\end{equation*}
	only if $q_{\phi}(\vv{z}) = p_{\mathcal{N}}(\vv{z})$.
\end{lemma}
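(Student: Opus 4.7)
The plan is to isolate the $\lambda$-dependence of $\Lprior$ and reduce the hypothesis to the statement that a single KL divergence vanishes. The change-of-variables identity $p_\lambda(\vv{z}) = p_{\mathcal{N}}(f_\lambda(\vv{z})) \AbsDet{\partial f_\lambda / \partial \vv{z}}$ lets me collapse the two flow-specific terms in $\Lprior$, giving
$$\Lprior = \E_{p^{\star}(\vv{x})} \E_{q_\phi(\vv{z}|\vv{x})}\left[\log p_\theta(\vv{x}|\vv{z}) - \log q_\phi(\vv{z}|\vv{x})\right] + \E_{q_\phi(\vv{z})}[\log p_\lambda(\vv{z})].$$
Only the last term depends on $\lambda$, and since $\E_{q_\phi(\vv{z})}[\log p_\lambda(\vv{z})] = -\KLD[q_\phi(\vv{z})\|p_\lambda(\vv{z})] + \E_{q_\phi(\vv{z})}[\log q_\phi(\vv{z})]$, maximizing $\Lprior$ in $\lambda$ is equivalent to minimizing $\KLD[q_\phi(\vv{z})\|p_\lambda(\vv{z})]$ in $\lambda$.

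Next, I would reuse the identity $\Lvae = \Lpriorvae$ from the proof of \cref{lemma-A1}, where $\lambda_0$ is the parameter with $f_{\lambda_0} = \IdentityMapping$ and hence $p_{\lambda_0}(\vv{z}) = p_{\mathcal{N}}(\vv{z})$. Evaluating the first paragraph's expression at $\lambda_0$ shows that $\Lvae$ agrees with the $\lambda$-independent part of $\Lprior$ plus the quantity $-\KLD[q_\phi(\vv{z})\|p_{\mathcal{N}}(\vv{z})] - H[q_\phi(\vv{z})]$. Thus the hypothesis $\Lvae = \max_\lambda \Lprior$ becomes
$$\KLD[q_\phi(\vv{z})\|p_{\mathcal{N}}(\vv{z})] = \min_\lambda \KLD[q_\phi(\vv{z})\|p_\lambda(\vv{z})],$$
and it remains to argue that the right-hand side is zero, after which the Gibbs inequality forces $q_\phi(\vv{z}) = p_{\mathcal{N}}(\vv{z})$ almost everywhere.

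The substantive step is showing that the minimum KL over the flow family is $0$. I would appeal to a universal-approximation property of the flow family: provided the invertible maps $f_\lambda$ are rich enough that the induced push-forward densities are dense (in KL) in the space of absolutely continuous densities on $\mathbb{R}^n$, one can choose $f_\lambda$ to realise (up to an arbitrary small KL gap) the Knothe--Rosenblatt rearrangement sending $q_\phi(\vv{z})$ to $\mathcal{N}(\vv{0},\vv{I})$, yielding $p_\lambda(\vv{z}) = q_\phi(\vv{z})$ and thus making the infimum $0$. The main obstacle is exactly this expressiveness assumption: a Real NVP with a fixed, finite number of coupling blocks is not literally universal, so the lemma should either be read with an implicit ``in the limit of sufficiently expressive flows'' or be reformulated as a supremum over a dense family. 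Granted such an assumption (which the paper seems to take for granted in writing $\max_\lambda$), the two earlier paragraphs deliver the ``only if'' direction immediately; without it, the lemma's conclusion can only be stated as $\KLD[q_\phi(\vv{z})\|p_{\mathcal{N}}(\vv{z})]$ being no larger than the approximation error of the flow family, which is a strictly weaker but still informative statement.
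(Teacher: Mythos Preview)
Your argument is correct and considerably more direct than the paper's, but the route is genuinely different. The paper does not perform the KL decomposition at all. Instead it treats $\Lprior$ as a functional $\mathcal{L}[f]$ of the flow map, invokes a universal-approximation claim to pass from ``$\lambda_0$ maximizes over the parametrized family'' to ``$f=\IdentityMapping$ is an extremum over all smooth $f$'', applies the Euler--Lagrange equation to obtain the first-order condition $\partial F/\partial f^i - \sum_j \partial_{z_j}\,\partial F/\partial(\partial_{z_j}f^i)=0$ at $f=\IdentityMapping$, reduces this to the PDE $-z_i\,q_\phi(\vv{z}) = \partial_{z_i} q_\phi(\vv{z})$, and then solves that PDE dimension by dimension to recover the unit Gaussian.

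What each approach buys: your cross-entropy/KL reduction is short, transparent, and makes the role of the expressiveness hypothesis explicit---you need the flow family to drive $\KLD[q_\phi\|p_\lambda]$ to zero (a global density statement), and you rightly flag that a fixed-depth Real NVP does not literally satisfy this. The paper's variational argument is longer and more mechanical, but its implicit expressiveness requirement is in principle milder: it only needs the flow family to be rich \emph{locally} at the identity, i.e.\ to admit perturbations in every smooth direction so that the first variation is forced to vanish. Both proofs therefore rest on an idealization about flow expressiveness; the paper hides it behind a citation to Hornik, whereas you state it openly and even sketch the Knothe--Rosenblatt construction. Your proof is the cleaner of the two; the paper's has the (modest) advantage that its hypothesis is a tangent-space condition rather than a global approximation one.
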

\begin{proof}
	As \cref{lemma-A1} has proved, $\Lvae = \Lpriorvae$, thus we only need to prove $\lambda_0$ is not the optimal solution of $\max_{\lambda} \Lprior$.
	
	We start by introducing a non-parameterized continuous function $f(\vv{z})$, and rewrite $\Lprior$ as a functional on $f$:
	\begin{equation*}
		\mathcal{L}[f] = \E_{p^{\star}(\vv{x})}\,\E_{q_\phi(\vv{z}|\vv{x})} \left[\log p_\theta(\vv{x}|\vv{z})-\log q_\phi(\vv{z}|\vv{x})+\log\AbsDet{\frac{\partial f}{ \partial \vv{z}}} + \log p_\mathcal{N}\myleft(f(\vv{z})\myright) \right]
	\end{equation*}
	According to \citet{hornikApproximationCapabilitiesMultilayer1991}, neural networks can represent any continuous function defined on $\mathbb{R}^n$.
	If we can find a continuous differentiable function $f(\vv{z})$, which will give $\mathcal{L}[f] > \Lvae$, then there must exist a neural network derived $f_{\lambda}(\vv{z}) = f(\vv{z})$, \st{} $\Lprior > \Lpriorvae$.
	Because of this, although we can only apply calculus of variations on continuous differentiable functions, it is sufficient to prove \cref{lemma-A2} with this method.
	We write $\mathcal{L}[f]$ into the form of Euler's equation:
	\begin{equation*}
		\mathcal{L}[f] = \int F\myleft(\vv{z},f,\frac{\partial f}{\partial \vv{z}}\myright)\,\dd{\vv{z}}
	\end{equation*}
	where
	\begin{align*}
		&F\myleft(\vv{z},f,\frac{\partial f}{\partial \vv{z}}\myright) \\
		=& \int p^{\star}(\vv{x})\,q_{\phi}(\vv{z}|\vv{x})\left[ \log p_{\theta}(\vv{x}|\vv{z}) - \log q_{\phi}(\vv{z}|\vv{x}) + \log\AbsDet{\frac{\partial f}{\partial \vv{z}}} + \log p_{\mathcal{N}}\myleft(f(\vv{z})\myright) \right]\,\dd{\vv{x}} \\
		=& \int p^{\star}(\vv{x})\,q_{\phi}(\vv{z}|\vv{x})\left[ \log p_{\theta}(\vv{x}|\vv{z}) - \log q_{\phi}(\vv{z}|\vv{x}) + \log\AbsDet{\frac{\partial f}{\partial \vv{z}}} + \sum_k \log p_{\mathcal{N}}\myleft(f^i(\vv{z})\myright) \right]\,\dd{\vv{x}}
	\end{align*}
	Here we use $f^i(\vv{z})$ to denote the $i$-th dimension of the output of $f(\vv{z})$, while $x_j$ and $z_j$ to denote the $j$-th dimension of $\vv{x}$ and $\vv{z}$.
	For $\log\AbsDet{\partial f/\partial \vv{z}}$, we can further expand it \wrt{} the $k$-th row:
	\begin{equation*}
		\log\AbsDet{\frac{\partial f}{\partial \vv{z}}} = 
			\log \left[ \sum_j \frac{\partial f^i}{\partial z_j} (-1)^{i+j} M_{ij} \right]	
	\end{equation*}
	where $M_{ij}$ is the $(i,j)$ minor of the Jacobian matrix $\partial f/\partial \vv{z}$.

	Assume we have $\Lvae = \max_{\lambda} \Lprior$.  That is, $\mathcal{L}[f]$ attains its extremum at $f = f_{\lambda_0} = \IdentityMapping$, or:
	\begin{equation}
		\frac{\delta L}{\delta f} = 0 \label{eqn:lemma-A2-L-partial}	
	\end{equation}
	According to Euler's equation~\citep[page 14 and 35]{gelfandCalculusVariations2000}, the necessary condition for \eqnref{eqn:lemma-A2-L-partial} is:
	\begin{equation}
		\frac{\partial F}{\partial f^i} - \sum_j \frac{\partial}{\partial z_j} \frac{\partial F}{\partial (\partial_{z_j} f^i)} = 0 \label{eqn:lemma-A2-eular-equation}
	\end{equation}
	for all $i$. Note we use $\partial_{z_j} f^i$ to denote $\frac{\partial f^i}{\partial z_j}$.

	Consider the term $\frac{\partial F}{\partial f^i}$, we have:
	\begin{equation*}
		\frac{\partial F}{\partial f^i} = \int p^{\star}(\vv{x})\,q_{\phi}(\vv{z}|\vv{x}) \cdot \frac{1}{p_{\mathcal{N}}\myleft(f^i(\vv{z})\myright)}	\cdot \frac{\partial p_{\mathcal{N}}\myleft(f^i(\vv{z})\myright)}{\partial f^i}\,\dd{\vv{x}}
	\end{equation*}
	where $p_{\mathcal{N}}\myleft(f^i(\vv{z})\myright) = \frac{1}{\sqrt{2\pi}}\exp\myleft[-\frac{\myleft(f^i(\vv{z})\myright]^2}{2}\myright]$, thus we have:
	\begin{equation*}
		\frac{\partial p_{\mathcal{N}}\myleft(f^i(\vv{z})\myright)}{\partial f^i} = \frac{1}{\sqrt{2\pi}} \exp\myleft[-\frac{\myleft(f^i(\vv{z})\myright)^2}{2}\myright] \cdot \left(-f^i(\vv{z})\right) = -p_{\mathcal{N}}\myleft(f^i(\vv{z})\myright)\,f^i(\vv{z})
	\end{equation*}
	Therefore,
	\begin{equation}
		\frac{\partial F}{\partial f^i} = \int p^{\star}(\vv{x})\,q_{\phi}(\vv{z}|\vv{x}) \cdot \frac{1}{p_{\mathcal{N}}\myleft(f^i(\vv{z})\myright)}	\cdot \left[-p_{\mathcal{N}}\myleft(f^i(\vv{z})\myright)\,f^i(\vv{z})\right] \,\dd{\vv{x}} = \int -p^{\star}(\vv{x})\,q_{\phi}(\vv{z}|\vv{x})\,f^i(\vv{z})\,\dd{\vv{x}} \label{eqn:lemma-A2-term-1}
	\end{equation}

	Consider the other term $\sum_j \frac{\partial}{\partial z_j} \frac{\partial F}{\partial (\partial_{z_j} f^i)}$,
	\begin{align*}
		\frac{\partial F}{\partial (\partial_{z_j} f^i)} &= \int \frac{\partial \myleft( p^{\star}(\vv{x})\,q_{\phi}(\vv{z}|\vv{x})\,\log\AbsDet{\partial f/\partial \vv{z}} \myright)}{\partial (\partial_{z_j} f^i)}\,\dd{\vv{x}} \\
		&= \int p^{\star}(\vv{x})\,q_{\phi}(\vv{z}|\vv{x})\,\frac{\partial\myleft( \log \left[ \sum_k \frac{\partial f^i}{\partial z_k} (-1)^{i+k} M_{ik} \right] \myright)}{\partial (\partial_{z_j} f^i)}\,\dd{\vv{x}} \\
		&= \int p^{\star}(\vv{x})\,q_{\phi}(\vv{z}|\vv{x})\cdot\frac{1}{\AbsDet{\partial f / \partial \vv{z}}}\cdot(-1)^{i+j} M_{ij}\,\dd{\vv{x}}
	\end{align*}
	Since $f = \IdentityMapping$, we have $\partial f/\partial \vv{z}=\vv{I}$, thus $\partial f/\partial \vv{z}$ is independent on $z_j$, and:
	\begin{align}
		\frac{\partial}{\partial z_j} \frac{\partial F}{\partial (\partial_{z_j} f^i)} &= \int \frac{\partial q_{\phi}(\vv{z}|\vv{x})}{\partial z_j}\cdot p^{\star}(\vv{x})\cdot\frac{1}{\AbsDet{\partial f / \partial \vv{z}}}\cdot(-1)^{i+j} M_{ij}\,\dd{\vv{x}} \label{eqn:lemma-A2-term-2} \\
		M_{ij} &= \delta_{ij} = \begin{cases}
			1, & \text{if}\;\; i=j \\
			0, & \text{otherwise}
		\end{cases} \label{eqn:lemma-A2-Mij}
	\end{align}

	Substitute \cref{eqn:lemma-A2-term-1}, \eqref{eqn:lemma-A2-term-2} and \eqref{eqn:lemma-A2-Mij} into \eqnref{eqn:lemma-A2-eular-equation}, we have:
	\begin{equation*}
		\int -p^{\star}(\vv{x})\,q_{\phi}(\vv{z}|\vv{x})\,f^i(\vv{z}) - \sum_j \left[ \frac{\partial q_{\phi}(\vv{z}|\vv{x})}{\partial z_j}\cdot p^{\star}(\vv{x})\cdot(-1)^{i+j}\,\delta_{ij} \right]\,\dd{\vv{x}} = 0
	\end{equation*}
	We then have:
	\begin{align*}
		& \int -p^{\star}(\vv{x})\,q_{\phi}(\vv{z}|\vv{x})\,z_i - \frac{\partial q_{\phi}(\vv{z}|\vv{x})}{\partial z_i}\cdot p^{\star}(\vv{x})\cdot(-1)^{i+i} \delta_{ii} \,\dd{\vv{x}} = 0 \\
		\implies & \int -p^{\star}(\vv{x})\,q_{\phi}(\vv{z}|\vv{x})\,z_i - \frac{\partial q_{\phi}(\vv{z}|\vv{x})}{\partial z_i}\,p^{\star}(\vv{x}) \,\dd{\vv{x}} = 0 \\
		\implies & z_i \int -p^{\star}(\vv{x})\,q_{\phi}(\vv{z}|\vv{x})\,\dd{\vv{x}} = \int \frac{\partial q_{\phi}(\vv{z}|\vv{x})}{\partial z_i}\,p^{\star}(\vv{x})\,\dd{\vv{x}} \\
		\implies & z_i \int -p^{\star}(\vv{x})\,q_{\phi}(\vv{z}|\vv{x})\,\dd{\vv{x}} = \frac{\partial}{\partial z_i} \int q_{\phi}(\vv{z}|\vv{x})\,p^{\star}(\vv{x})\,\dd{\vv{x}} \\
		\implies & -z_i \cdot q_{\phi}(\vv{z}) = \frac{\partial}{\partial z_i}\, q_{\phi}(\vv{z})
	\end{align*}
    Let $q_{\phi}(\mathbf{z}) = q_{\phi}(z_1|z_2,\dots,z_K) \cdot q_{\phi}(z_2,\dots,z_K)$, where $K$ is the number of dimensions of $\mathbf{z}$.  We shall first solve the differential equation \wrt{} $z_1$:  
    \begin{align*}
    & -z_1 \cdot q_{\phi}(\mathbf{z}) = \frac{\partial}{\partial z_1}\,q_{\phi}(\mathbf{z}) \numberthis\label{eqn:lemma-A2-pde-1} \\
    \implies & -z_1 \cdot q_{\phi}(z_1|z_2,\dots,z_K)\cdot q_{\phi}(z_2,\dots,z_K) = \frac{\partial}{\partial z_1}\,q_{\phi}(z_1|z_2,\dots,z_K)\cdot q_{\phi}(z_2,\dots,z_K) \\
    \implies & -z_1 \cdot q_{\phi}(z_1|z_2,\dots,z_K)\cdot q_{\phi}(z_2,\dots,z_K) = q_{\phi}(z_2,\dots,z_K)\cdot \frac{\partial}{\partial z_1}\,q_{\phi}(z_1|z_2,\dots,z_K) \\
    \implies & -z_1 \cdot q_{\phi}(z_1|z_2,\dots,z_K) = \frac{\partial}{\partial z_1}\,q_{\phi}(z_1|z_2,\dots,z_K) \\
    \implies & -z_1\,\partial z_1 = \frac{1}{q_{\phi}(z_1|z_2,\dots,z_K)}\,\partial q_{\phi}(z_1|z_2,\dots,z_K) \\
    \implies & \int -z_1\,\partial z_1 = \int \frac{1}{q_{\phi}(z_1|z_2,\dots,z_K)}\,\partial q_{\phi}(z_1|z_2,\dots,z_K) \\
    \implies & -\frac{1}{2} z_1^2 + C(z_2,\dots,z_K) = \log q_{\phi}(z_1|z_2,\dots,z_K) \\
    \implies & \exp\myleft(-\frac{1}{2} z_1^2\myright)\cdot\exp\myleft(C(z_2,\dots,z_K)\myright) = q_{\phi}(z_1|z_2,\dots,z_K)
    \end{align*}
    Since $q_{\phi}(z_1|z_2,\dots,z_K)$ is a probability distribution, we have:
    \begin{equation*}
    	\exp\myleft(C(z_2,\dots,z_K)\myright) = \frac{1}{\int \exp\myleft(-\frac{1}{2} z_1^2\myright) \,\mathrm{d}z_1} = \frac{1}{\sqrt{2\pi}}
    \end{equation*}
	thus we have:
    \begin{equation*}
    q_{\phi}(\mathbf{z}) = q_{\phi}(z_1|z_2,\dots,z_K) \cdot q_{\phi}(z_2,\dots,z_K) = \frac{1}{\sqrt{2\pi}} \exp\myleft(-\frac{1}{2} z_1^2\myright)\cdot q_{\phi}(z_2,\dots,z_K)
    \end{equation*}
    We then solve the differential equation \wrt{} $z_2$:
    \begin{align*}
    & -z_2 \cdot q_{\phi}(\mathbf{z}) = \frac{\partial}{\partial z_2}\,q_{\phi}(\mathbf{z}) \\
    \implies & -z_2 \cdot \frac{1}{\sqrt{2\pi}} \exp\myleft(-\frac{1}{2} z_1^2\myright) \cdot q_{\phi}(z_2,\dots,z_K) = \frac{\partial}{\partial z_2} \frac{1}{\sqrt{2\pi}} \exp\myleft(-\frac{1}{2} z_1^2\myright) \cdot q_{\phi}(z_2,\dots,z_K) \\
    \implies & -z_2 \cdot \frac{1}{\sqrt{2\pi}} \exp\myleft(-\frac{1}{2} z_1^2\myright) \cdot q_{\phi}(z_2,\dots,z_K) = \frac{1}{\sqrt{2\pi}} \exp\myleft(-\frac{1}{2} z_1^2\myright) \cdot\frac{\partial}{\partial z_2} q_{\phi}(z_2,\dots,z_K) \\
    \implies & -z_2 \cdot q_{\phi}(z_2,\dots,z_K) = \frac{\partial}{\partial z_2} q_{\phi}(z_2,\dots,z_K) \numberthis\label{eqn:lemma-A2-pde-2}
    \end{align*}
    If we let $\mathbf{z}’ = z_2,\dots,z_K$, the form of \cref{eqn:lemma-A2-pde-2} is now exactly identical with \cref{eqn:lemma-A2-pde-1}.  Use the same method, we can solve the equation \wrt{} $z_2$, and further \wrt{} $z_3, \dots, z_K$.  Finally ,we can get the solution:
    \begin{equation*}
        q_{\phi}(\mathbf{z}) = \frac{1}{\left(\sqrt{2\pi}\right)^K}\prod_{i=1}^K \exp\myleft(-\frac{1}{2} z_i^2\myright)
    \end{equation*}
    which is $K$-dimensional unit Gaussian, \ie{}, $q_{\phi}(\vv{z}) = p_{\mathcal{N}}(\vv{z})$.
\end{proof}

%
\begin{lemma}\label{lemma-A3}
	For all $\theta, \phi$, if $q_{\phi}(\vv{z}) \neq p_{\mathcal{N}}(\vv{z})$, $\exists f_{\lambda} \neq f_{\lambda_0}$, \st{} $\mathcal{L}(\lambda,\theta,\phi) > \mathcal{L}(\lambda_0,\theta,\phi)$.
\end{lemma}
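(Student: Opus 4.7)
The plan is to reduce the improvement $\mathcal{L}(\lambda,\theta,\phi)-\mathcal{L}(\lambda_0,\theta,\phi)$ to a difference of two KL divergences, and then construct a concrete non-identity $f_\lambda$ that drives this difference strictly positive.

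First I would subtract the two objectives. Only the prior-dependent terms differ, and at $f_{\lambda_0}=\IdentityMapping$ we have $\log\AbsDet{\partial f_{\lambda_0}/\partial\vv{z}}=0$ and $p_{\mathcal{N}}(f_{\lambda_0}(\vv{z}))=p_{\mathcal{N}}(\vv{z})$. Marginalizing $p^{\star}(\vv{x})\,q_\phi(\vv{z}|\vv{x})$ over $\vv{x}$ down to $q_\phi(\vv{z})$, the difference collapses to $\E_{q_\phi(\vv{z})}[\log p_\lambda(\vv{z})-\log p_{\mathcal{N}}(\vv{z})]$, where $p_\lambda(\vv{z})=p_{\mathcal{N}}(f_\lambda(\vv{z}))\AbsDet{\partial f_\lambda/\partial\vv{z}}$. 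Adding and subtracting $\log q_\phi(\vv{z})$ inside the expectation rewrites this as
\[
\mathcal{L}(\lambda,\theta,\phi)-\mathcal{L}(\lambda_0,\theta,\phi) = \KLD[q_\phi(\vv{z})\,\|\,p_{\mathcal{N}}(\vv{z})] - \KLD[q_\phi(\vv{z})\,\|\,p_\lambda(\vv{z})].
\]
By hypothesis the first KL is strictly positive, so it suffices to drive the second KL below that value with some $f_\lambda\neq\IdentityMapping$.

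Second I would exhibit such an $f_\lambda$. Because $q_\phi(\vv{z})=\E_{p^{\star}(\vv{x})}[q_\phi(\vv{z}|\vv{x})]$ is a finite mixture of Gaussians with smooth everywhere-positive density on $\mathbb{R}^n$, the Knothe-Rosenblatt rearrangement supplies a continuous diffeomorphism $f^\star$ that pushes $q_\phi$ forward to $\mathcal{N}(\vv{0},\vv{I})$. Setting $f_\lambda=f^\star$ gives $p_\lambda\equiv q_\phi$, so the second KL vanishes and the total difference equals $\KLD[q_\phi\,\|\,p_{\mathcal{N}}]>0$; since $f^\star=\IdentityMapping$ would force $q_\phi=p_{\mathcal{N}}$, automatically $f^\star\neq f_{\lambda_0}$. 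To meet the requirement that $f_\lambda$ be realized by a neural network, I would then invoke the Hornik universal approximation result already used in the proof of Lemma A.2 to approximate $f^\star$ in $C^1$ norm by a neural-network parameterization $f_\lambda$, and appeal to continuity of the objective in $(f_\lambda,\partial f_\lambda/\partial\vv{z})$ to conclude that the strict improvement is preserved for close enough approximations.

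The main obstacle I anticipate is precisely this regularity control: the objective contains $\log\AbsDet{\partial f_\lambda/\partial\vv{z}}$, so one needs joint $C^1$ approximation rather than merely pointwise approximation of $f^\star$, and one must argue that the Knothe-Rosenblatt map of a smooth full-support Gaussian mixture is itself $C^1$ with non-degenerate Jacobian. A cleaner alternative that bypasses the transport construction altogether is to reuse the calculation in the proof of Lemma A.2 directly: there it was shown that the Euler equation for $\mathcal{L}[f]$ fails at $\IdentityMapping$ exactly when $q_\phi\neq p_{\mathcal{N}}$, so the functional derivative $\delta\mathcal{L}/\delta f$ is non-vanishing at the identity. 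A small perturbation $\IdentityMapping+\epsilon\,\delta f$ in a direction $\delta f$ along which the first variation is positive strictly increases $\mathcal{L}$ for small $\epsilon>0$, and universal approximation again supplies a neural-network $f_\lambda\neq f_{\lambda_0}$ realizing this improvement.
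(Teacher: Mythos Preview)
Your ``cleaner alternative'' at the end is exactly the paper's proof: the paper's entire argument for Lemma~A.3 is three lines, observing that the Euler-equation calculation already carried out in Lemma~A.2 shows $\left.\delta\mathcal{L}/\delta f\right|_{f=\IdentityMapping}\neq 0$ whenever $q_\phi\neq p_{\mathcal{N}}$, and hence some perturbation of the identity strictly increases $\mathcal{L}$.

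Your first route via the KL decomposition
\[
\mathcal{L}(\lambda,\theta,\phi)-\mathcal{L}(\lambda_0,\theta,\phi)=\KLD[q_\phi\,\|\,p_{\mathcal{N}}]-\KLD[q_\phi\,\|\,p_\lambda]
\]
and an explicit transport map is a genuinely different and more constructive argument: it shows not merely that the objective can be increased locally, but that the gap can in principle be closed entirely. The cost is exactly the regularity issue you flag, plus an implicit structural assumption---that $q_\phi$ is a finite Gaussian mixture---which requires the posterior family to be Gaussian and $p^\star$ to be finitely supported, neither of which is hypothesized in the lemma as stated. The paper's first-variation argument is agnostic to the form of $q_\phi(\vv{z}|\vv{x})$ and reuses machinery already in place, so it is both shorter and more general, at the expense of being non-constructive.
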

\begin{proof}
	If $q_{\phi}(\vv{z}) \neq p_{\lambda}(\vv{z})$, then according to \eqnref{eqn:lemma-A2-L-partial}, we have:
	\begin{equation*}
		\left.\frac{\delta L}{\delta f} \right|_{f = f_{\lambda_0}} \neq 0
	\end{equation*}
 	Then there must exist $f_{\lambda}$ in the neighborhood of $f_{\lambda_0}$, such that $\Lprior > \Lpriorvae$.
\end{proof}

Finally, we get to the proof for \cref{proposition-1}:
\begin{proposition*}
	
\end{proposition*}
\begin{proof}
	\textbf{Necessity}\hspace{0.5em}
	According to \cref{lemma-A2}, $\Lvae = \max_{\lambda} \Lprior$ implies $q_{\phi}(\vv{z}) = p_{\mathcal{N}}(\vv{z})$.
	Take \cref{lemma-A1} into consideration, it means if $q_{\phi}(\vv{z}) \neq p_{\mathcal{N}}(\vv{z})$, then $\forall \theta, \phi$, $\Lvae = \Lpriorvae < \max_{\lambda} \Lprior$.
	Hence, $f_{\lambda} \neq f_{\lambda_0}$ is the necessary condition for $\Lprior$ to reach its extremum if $q_{\phi}(\vv{z}) \neq p_{\mathcal{N}}(\vv{z})$.

	\textbf{Effectiveness}\hspace{0.5em}
	According to \cref{lemma-A3}, there always exist a $f_{\lambda} \neq f_{\lambda_0}$ when $q_{\phi}(\vv{z}) \neq p_{\lambda}(\vv{z})$, \st{} $\Lprior > \Lpriorvae = \Lvae$.
\end{proof}

\subsection{Proof for \cref{proposition-2}}
\label{sec:appendix-bernoulli-px-given-z-extermum}

\begin{proof}
	To apply calculus of variations, we need to substitute the parameterized, bounded $\vv{\mu}_{\theta}(\vv{z})$ with a non-parameterized, unbounded mapping.
	Since $0 < \mu_{\theta}^k(\vv{z}) < 1$, and $\vv{\mu}_{\theta}(\vv{z})$ is produced by neural network, which ensures $\vv{\mu}_{\theta}(\vv{z})$ is a continuous mapping, then $\forall \theta$, there exists unbounded $\vv{t}(\vv{z})$, \st{}
	\begin{align*}
		\mu^k_{\theta}(\vv{z}) &= \frac{\Exp{t^k(\vv{z})}}{1 + \Exp{t^k(\vv{z})}} \\
		t^k(\vv{z}) &= \log \mu^k_{\theta}(\vv{z}) - \log \myleft( 1 - \mu^k_{\theta}(\vv{z}) \myright)
	\end{align*}
	and for all continuous mapping $\vv{t}(\vv{z})$, there also exists $\vv{\mu}_{\theta}(\vv{z})$, satisfying the above equations.
	In fact, this substitution is also adopted in the actual implementation of our models.

	The probability of $p_{\theta}(\vv{x}|\vv{z})$ is given by:
	\begin{equation*}
		p_{\theta}(\vv{x}|\vv{z}) = \mathrm{Bernoulli}(\vv{\mu}_{\theta}(\vv{z})) = \prod_{k} \left(\mu^k_{\theta}(\vv{z})\right)^{x_k}\left(1-\mu^k_{\theta}(\vv{z})\right)^{(1-x_k)}	
	\end{equation*}
	Then we have:
	\begin{align*}
		\log p_{\theta}(\vv{x}|\vv{z}) &= \sum_k \Big\{x_k \log \mu^k_{\theta}(\vv{z}) + (1-x_k) \log \left(1-\mu^k_{\theta}(\vv{z})\right)\Big\} \\
		&= \sum_k \Big\{ x_k \,t^k(\vv{z}) - x_k \log\left[1 + \Exp{t^k(\vv{z})}\right] - (1-x_k) \log\left[1 + \Exp{t^k(\vv{z})}\right] \Big\} \\
		&= \sum_k \Big\{ x_k \,t^k(\vv{z}) - \log\left[1 + \Exp{t^k(\vv{z})}\right] \Big\}
	\end{align*}
	The training objective $\mathcal{L}$ can be then formulated as a functional on $\vv{t}(\vv{z})$:
	\begin{align*}
		\mathcal{L}[\vv{t}] &= \E_{p^{\star}(\vv{x})}\,\E_{q_{\phi}(\vv{z}|\vv{x})}\left[\log p_{\theta}(\vv{x}|\vv{z}) + \log \frac{p_{\theta}(\vv{z})}{q_{\phi}(\vv{z}|\vv{x})} \right] \\
		&= \iint p^{\star}(\vv{x}) \, q_{\phi}(\vv{z}|\vv{x}) \left(\log p_{\theta}(\vv{x}|\vv{z}) + \log \frac{p_{\theta}(\vv{z})}{q_{\phi}(\vv{z}|\vv{x})} \right) \dd{\vv{z}} \, \dd{\vv{x}} \\
		&= \int F(\vv{z},\vv{t}) \,\dd{\vv{z}}
	\end{align*}
	where $F(\vv{z},\vv{t})$ is:
	\begin{equation*}
		F(\vv{z},\vv{t}) = \int p^{\star}(\vv{x}) \, q_{\phi}(\vv{z}|\vv{x}) \left[ \sum_k \Big\{ x_k \,t^k(\vv{z}) - \log\left[1 + \Exp{t^k(\vv{z})}\right] \Big\} + \log \frac{p_{\theta}(\vv{z})}{q_{\phi}(\vv{z}|\vv{x})} \right] \dd{\vv{x}}
	\end{equation*}
	According to Euler's equation~\citep[page 14 and 35]{gelfandCalculusVariations2000}, the necessary condition for $\mathcal{L}[\vv{t}]$ to have an extremum for a given $\vv{t}(\vv{z})$ is that, $\vv{t}(\vv{z})$ satisfies $\partial F / \partial t^k = 0, \forall k$.  Thus we have:
	\begin{align*}
		\frac{\partial F}{\partial t^k} = 0 
		\implies & \int p^{\star}(\vv{x}) \, q_{\phi}(\vv{z}|\vv{x}) \left[ x_k - \frac{\Exp{t^k(\vv{z})}}{1 + \Exp{t^k(\vv{z})}} \right] \,\dd{\vv{x}} = 0 \\
		\implies & \int p^{\star}(\vv{x}) \, q_{\phi}(\vv{z}|\vv{x}) \left[ x_k - \mu_{\theta}^k(\vv{z}) \right] \,\dd{\vv{x}} = 0 \\
		\implies & \sum_i q_{\phi}(\vv{z}|\vv{x}^{(i)})\left[x_k^{(i)} - \mu^k_{\theta}(\vv{z})\right] = 0 \\
		\implies & \sum_i q_{\phi}(\vv{z}|\vv{x}^{(i)}) \, x_k^{(i)} = \left(\sum_i q_{\phi}(\vv{z}|\vv{x}^{(i)})\right) \mu_{\theta}^k(\vv{z}) \\
		\implies & \mu_{\theta}^k(\vv{z}) = \frac{\sum_i q_{\phi}(\vv{z}|\vv{x}^{(i)})\,x_k^{(i)}}{\sum_j q_{\phi}(\vv{z}|\vv{x}^{(j)})}
	\end{align*}
	That is to say, $\vv{\mu}_{\theta}(\vv{z}) = \frac{\sum_i q_{\phi}(\vv{z}|\vv{x}^{(i)})\,\vv{x}^{(i)}}{\sum_j q_{\phi}(\vv{z}|\vv{x}^{(j)})} = \sum_i w_i(\vv{z})\,\vv{x}^{(i)}$.
\end{proof}

\citet{rezendeTamingVAEs2018} has proved that when $p_{\theta}(\vv{x}|\vv{z}) = \mathcal{N}(\vv{\mu}_{\theta}(\vv{z}),\sigma^2 \vv{I})$, where $\sigma$ is a global fixed constant, the optimal decoder $\vv{\mu}_{\theta}(\vv{z}) = \sum_i w_i(\vv{z})\,\vv{x}^{(i)}$, which is exactly the same as our conclusion.
\citet{roscaDistributionMatchingVariational2018} has proved that the gradient of $\mathrm{Bernoulli}(\vv{\mu}_{\theta}(\vv{z}))$ is the same as $\mathcal{N}(\vv{\mu}_{\theta}(\vv{z}),\sigma^2 \vv{I})$ when $\sigma = 1$, but they did not calculate out the optimal decoder.
We push forward both these works.

\subsection{Trade-off between reconstruction loss and mutual information}
\label{sec:appendix-reconstruction-loss-and-mutual-information}

To show there is a trade-off between \textit{reconstruction loss} and \textit{mutual information}, we first assume the \textit{mutual information} $\MI_{\phi}[Z;X]$ reaches its optimum value.
Since
\begin{equation*}
	\MI_{\phi}[Z;X] = \iint q_{\phi}(\vv{z},\vv{x}) \log \frac{q_{\phi}(\vv{z},\vv{x})}{q_{\phi}(\vv{z})\,p^{\star}(\vv{x})} \,\dd{\vv{z}}\,\dd{\vv{x}} = \KLD[q_{\phi}(\vv{z},\vv{x})\|q_{\phi}(\vv{z})\,p^{\star}(\vv{x})]
\end{equation*}
we can see that $\MI_{\phi}[Z;X]$ reaches its minimum value 0 if and only if $q_{\phi}(\vv{z},\vv{x}) = q_{\phi}(\vv{z})\,p^{\star}(\vv{x})$.
This means $q_{\phi}(\vv{z}|\vv{x}) = q_{\phi}(\vv{z})$ for all $\vv{x}$ and $\vv{z}$, since $q_{\phi}(\vv{z},\vv{x}) = q_{\phi}(\vv{z}|\vv{x})\,p^{\star}(\vv{x})$.
According to \cref{proposition-2}, we then have:
\begin{equation*}
	\vv{\mu}_{\theta}(\vv{z}) = \frac{\sum_i q_{\phi}(\vv{z}|\vv{x}^{(i)})\,\vv{x}^{(i)}}{\sum_j q_{\phi}(\vv{z}|\vv{x}^{(j)})} = \frac{\sum_i q_{\phi}(\vv{z})\,\vv{x}^{(i)}}{\sum_j q_{\phi}(\vv{z})} = \frac{1}{N} \sum_i \vv{x}^{(i)}
\end{equation*}
this means the decoder $\vv{\mu}_{\theta}(\vv{z})$ will produce the same reconstruction output for all $\vv{z}$, which is the average of all training data, hence causing  a poor \textit{reconstruction loss}.

The fact that \textit{mutual information} can reach its optimum value only when having a poor \textit{reconstruction loss} indicates there is trade-off between \textit{reconstruction loss} and \textit{mutual information}.

%

\section{Experimental details}

\setcounter{figure}{0}
\renewcommand{\thefigure}{B.\arabic{figure}}
\renewcommand{\theHfigure}{BH.\thefigure}

\setcounter{table}{0}
\renewcommand{\thetable}{B.\arabic{table}}
\renewcommand{\theHtable}{BH.\thetable}

\setcounter{algorithm}{0}
\renewcommand{\thealgorithm}{B.\arabic{algorithm}}
\renewcommand{\theHalgorithm}{BH.\arabic{algorithm}}

\setcounter{lemma}{0}
\renewcommand{\thelemma}{B.\arabic{lemma}}
\renewcommand{\theHlemma}{BH.\thelemma}

\setcounter{equation}{0}
\renewcommand{\theequation}{B.\arabic{equation}}
\renewcommand{\theHequation}{BH.\theequation}

\subsection{Datasets}
\label{sec:appendix-datasets}

\paragraph{MNIST}
MNIST is a 28x28 grayscale image dataset of hand-written digits, with 60,000 data points for training and 10,000 for testing.
When validation is required for early-stopping, we randomly split the training data into 50,000 for training and 10,000 for validation.

Since we use Bernoulli $p_{\theta}(\vv{x}|\vv{z})$ to model these images in VAE, we binarize these images by the method in \citep{salakhutdinovQuantitativeAnalysisDeep2008}: each pixel value is randomly set to 1 in proportion to its pixel intensity.
The training and validation images are re-binarized at each epoch.
However, the test images are binarized beforehand for all experiments.
We binarize each test image 10 times, and use all these 10 binarized data points in evaluation.
This method results in a 10 times larger test set, but we believe this can help us to obtain a more objective evaluation result.

\paragraph{StaticMNIST}
StaticMNIST~\citep{larochelleNeuralAutoregressiveDistribution2011} is a pre-binarized MNIST image dataset, with the original 60,000 training data already splitted into 50,000 for training and 10,000 for validation.
We always use validation set for early-stopping on StaticMNIST.
Meanwhile, since StaticMNIST has already been binarized, the test set is used as-is without 10x enlargement.

\paragraph{FashionMNIST}
FashionMNIST~\citep{xiaoFashionMNISTNovelImage2017} is a recently proposed image dataset of grayscale fashion products, with the same specification as MNIST.
We thus use the same training-validation split and the same binarization method just as MNIST.

\paragraph{Omniglot}
Omniglot~\citep{lakeHumanlevelConceptLearning2015} is a 28x28 grayscale image dataset of hand-written characters.
We use the preprocessed data from \citep{burdaImportanceWeightedAutoencoders2015}, with 24,345 data points for training and 8,070 for testing.
When validation is required, we randomly split the training data into 20,345 for training and 4,000 for validation.
We use dynamic binarization on Omniglot just as MNIST.

\FloatBarrier

\subsection{Network architectures}
\label{sec:appendix-network-architectures}
\paragraph{Notations}

In order to describe the detailed architecture of our models, we will introduce auxiliary functions to denote network components.
A function $h_{\phi}(\vv{x})$ should denote a sub-network in $q_{\phi}(\vv{z}|\vv{x})$, with a subset of $\phi$ as its own learnable parameters.
For example, if we write $q_{\phi}(\vv{z}|\vv{x}) = q_{\phi}(\vv{z}|h_{\phi}(\vv{x})) = \mathcal{N}(\vv{\mu}_{\phi}(h_{\phi}(\vv{x})),\vv{\sigma}^2_{\phi}(h_{\phi}(\vv{x}))\,\vv{I})$, it means that the the posterior $q_{\phi}(\vv{z}|\vv{x})$ is a Gaussian, whose mean and standard deviation are derived by one shared sub-network $h_{\phi}(\vv{x})$ and two separated sub-networks $\vv{\mu}_{\phi}(\cdot)$ and $\vv{\sigma}_{\phi}(\cdot)$, respectively.

The structure of a network is described by composition of elementary neural network layers.

$\NetLinear[k]$ indicates a linear dense layer with $k$ outputs.
$\NetDense[k]$ indicates a non-linear dense layer.
$a \NetTo b$ indicates a composition of $a$ and $b$, \eg{}, $Dense[m] \NetTo Dense[n]$ indicates two successive dense layers, while the first layer has $m$ outputs and the second layer has $n$ outputs.
These two dense layers can also be abbreviated as $\NetDense[m \NetTo n]$.

$\NetConv[H \times W \times C]$ denotes a non-linear convolution layer, whose output shape is $H \times W \times C$, where $H$ is the height, $W$ is the width and $C$ is the channel size.
As abbreviation, $\NetConv[H_1 \times W_1 \times C_1 \NetTo H_2 \times W_2 \times C_2]$ denotes two successive non-linear convolution layers.
$\NetResnet[\cdot]$ denotes non-linear resnet layer(s)~\citep{zagoruykoWideResidualNetworks2016}.
All $\NetConv$ and $\NetResnet$ layers by default use 3x3 kernels, unless the kernel size is specified as subscript (\eg{}, $\NetConv_{1 \times 1}$ denotes a 1x1 convolution layer).
The strides of $\NetConv$ and $\NetResnet$ layers are automatically determined by the input and output shapes, which is 2 in most cases.
$\NetLinearConv[\cdot]$ denotes linear convolution layer(s).

$\NetDeConv[\cdot]$ and $\NetDeResnet[\cdot]$ denotes deconvolution and deconvolutional resnet layers, respectively.

$\NetPixelCNN[\cdot]$ is a PixelCNN layer proposed by~\citet{salimansPixelcnnImprovingPixelcnn2017}.
It uses resnet layers, instead of convolution layers.
Details can be found in its original paper.

$\NetFlatten$ indicates to reshape the input 3-d tensor into a vector, while $\NetUnFlatten[H \times W \times C]$ indicates to reshape the input vector into a 3-d tensor of shape $H \times W \times C$.
$\NetConcat[a,b]$ indicates to concat the output of $a$ and $b$ along the last axis.

$\NetCouplingLayer$ and $\NetActNorm$ are components of Real NVP, proposed by \citet{dinhDensityEstimationUsing2016,kingmaGlowGenerativeFlow2018}.
$\NetInvertibleDense$ is a component modified from \textit{invertible 1x1 convolution}~\citep{kingmaGlowGenerativeFlow2018}.
We shall only introduce the details of $\NetCouplingLayer$, since it contains sub-networks, while the rest two are just simple components.

\paragraph{General configurations}

All non-linear layers use \textit{leaky relu}~\citep{maasRectifierNonlinearitiesImprove2013} activation.

The observed variable $\vv{x}$ (\ie{}, the input image) is always a 3-d tensor, with shape $H \times W \times C$, whether or not the model is convolutional.
The latent variable $\vv{z}$ is a vector, whose number of dimensions is chosen to be 40 on MNIST and StaticMNIST, while 64 on FashionMNIST and Omniglot.
This is because we think the latter two datasets are conceptually more complicated than the other two, thus requiring higher dimensional latent variables.

The Gaussian posterior $q_{\phi}(\vv{z}|\vv{x})$ is derived as:
\begin{align*}
	q_{\phi}(\vv{z}|\vv{x}) &= \mathcal{N}(\vv{\mu}_{\phi}(h_{\phi}(\vv{x})),\vv{\sigma}^2_{\phi}(h_{\phi}(\vv{x}))\,\vv{I})	\\
	\vv{\mu}_{\phi}(h_{\phi}(\vv{x})) &= h_{\phi}(\vv{x}) \NetTo \NetLinear[\Dim(\vv{z})] \\
	\log \vv{\sigma}_{\phi}(h_{\phi}(\vv{x})) &= h_{\phi}(\vv{x}) \NetTo \NetLinear[\Dim(\vv{z})]
\end{align*}
Note we make the network to produce $\log \vv{\sigma}_{\phi}(h_{\phi}(\vv{x}))$ instead of directly producing $\vv{\sigma}_{\phi}(h_{\phi}(\vv{x}))$.
$h_{\phi}(\vv{x})$ is the hidden layers, varying among different models.

For binarized images, we use Bernoulli conditional distribution $p_{\theta}(\vv{x}|\vv{z})$, derived as:
\begin{align*}
	p_{\theta}(\vv{x}|\vv{z}) &= \mathrm{Bernoulli}[\vv{\mu}_{\theta}(h_{\theta}(\vv{z}))] \\
	\log \frac{\vv{\mu}_{\theta}(h_{\theta}(\vv{z}))}{1-\vv{\mu}_{\theta}(h_{\theta}(\vv{z}))} &= \begin{cases}
			h_{\theta}(\vv{z}) \NetTo \NetLinear[784] \NetTo \NetUnFlatten[28 \times 28 \times 1] &\text{for \DenseVAE{}} \\
			h_{\theta}(\vv{z}) \NetTo \NetLinearConv_{1 \times 1}[28 \times 28 \times 1] &\text{otherwise}
		\end{cases}
\end{align*}
Note we make the network to produce $\log \frac{\vv{\mu}_{\theta}(h_{\theta}(\vv{z}))}{1-\vv{\mu}_{\theta}(h_{\theta}(\vv{z}))}$, the \textit{logits} of Bernoulli distribution, instead of producing the Bernoulli mean $\vv{\mu}_{\theta}(h_{\theta}(\vv{z}))$ directly.

\paragraph{\DenseVAE{}}
$h_{\phi}(\vv{x})$ and $h_{\theta}(\vv{z})$ of \DenseVAE{} are composed of dense layers, formulated as:
\begin{align*}
	h_{\phi}(\vv{x}) &= \vv{x} \NetTo \NetFlatten \NetTo \NetDense[500 \NetTo 500] \\
	h_{\theta}(\vv{z}) &= \vv{z} \NetTo \NetDense[500 \NetTo 500]
\end{align*}

\paragraph{\ConvOrResnetVAE{}}
$h_{\phi}(\vv{x})$ and $h_{\theta}(\vv{z})$ of \ConvVAE{} are composed of (de)convolutional layers, while those of \ResnetVAE{} consist of (deconvolutional) resnet layers.
We only describe the architecture of \ResnetVAE{} here.
The structure of \ConvVAE{} can be easily obtained by replacing all (deconvolutional) resnet layers with (de)convolution layers:
\begin{align*}
	h_{\phi}(\vv{x}) &= \vv{x} \NetTo \NetResnet[
		28 \times 28 \times 32 
			\NetTo 28 \times 28 \times 32 
			\NetTo 14 \times 14 \times 64 \\
			&\qquad\qquad\qquad\;\, \NetTo 14 \times 14 \times 64 
			\NetTo 7 \times 7 \times 64 
			\NetTo 7 \times 7 \times 16
		]  \\
		& \qquad \NetTo\NetFlatten \\
	h_{\theta}(\vv{z}) &= \vv{z} \NetTo \NetDense[784] \NetTo \NetUnFlatten[7 \times 7 \times 16] \\
		&\quad\;\;\;\NetTo\NetDeResnet[
			7 \times 7 \times 64
			\NetTo 14 \times 14 \times 64
			\NetTo 14 \times 14 \times 64 \\
			&\qquad\qquad\qquad\quad\;\; \NetTo 28 \times 28 \times 32
			\NetTo 28 \times 28 \times 32
		]
\end{align*}

\paragraph{\PixelVAE{}}
$h_{\phi}(\vv{x})$ of \PixelVAE{} is the exactly same as \ResnetVAE{}, while $h_{\theta}(\vv{z})$ is derived as:
\begin{align*}
	h_{\theta}(\vv{z}) &= \NetConcat[\vv{x	},\tilde{h}_{\theta}(\vv{z})] \\
		&\quad\,\NetTo \NetPixelCNN[
			28 \times 28 \times 33
			\NetTo 28 \times 28 \times 33
			\NetTo 28 \times 28 \times 33
		] \\
	\tilde{h}_{\theta}(\vv{z}) &= \vv{z} \NetTo \NetDense[784] \NetTo \NetUnFlatten[7 \times 7 \times 16] \\
		&\quad\;\;\;\NetTo\NetDeResnet[
			7 \times 7 \times 64
			\NetTo 14 \times 14 \times 64
			\NetTo 14 \times 14 \times 64 \\
			&\qquad\qquad\qquad\quad\;\; \NetTo 28 \times 28 \times 32
			\NetTo 28 \times 28 \times 32
		]
\end{align*}
As \citet{salimansPixelcnnImprovingPixelcnn2017}, we use dropout in PixelCNN layers,  with rate 0.5.

\paragraph{Real NVP}

The \RealNVP{} consists of $K$ blocks, while each block consist of an \textit{invertible dense}, a \textit{coupling layer}, and an \textit{actnorm}.
The Real NVP mapping for prior, \ie{} $f_{\lambda}(\vv{z})$, can be formulated as:
\begin{align*}
	f_{\lambda}(\vv{z}) &= \vv{z} \NetTo f_1(\vv{h}_1) \NetTo \cdots \NetTo f_K(\vv{h}_K) \\
	f_k(\vv{h}_k) &= \vv{h}_k \NetTo \NetInvertibleDense \NetTo \NetCouplingLayer \NetTo \NetActNorm \\
	\NetCouplingLayer(\vv{u}) &=
		\NetConcat\left[\vv{u}_l,\vv{u}_r\odot \mathrm{Sigmoid}\left(s_{\lambda}(h_{\lambda,k}(\vv{u}_l))\right) + t_{\lambda}(h_{\lambda,k}(\vv{u}_l))\right] \\
	h_{\lambda,k}(\vv{u}_l) &= \vv{u}_l \NetTo \NetDense[256] \\
	s_{\lambda}(h_{\lambda,k}(\vv{u}_l)) &= h_{\lambda,k}(\vv{u}_l) \NetTo \NetLinear[\Dim(\vv{u}_r)] \\
	t_{\lambda}(h_{\lambda,k}(\vv{u}_l)) &= h_{\lambda,k}(\vv{u}_l) \NetTo \NetLinear[\Dim(\vv{u}_r)]
\end{align*}
where $\vv{u}_l = \vv{u}_{0 : \lfloor\Dim(\vv{u})/2\rfloor}$ is the left half of $\vv{u}$, and $\vv{u}_r = \vv{u}_{\lfloor\Dim(\vv{u})/2\rfloor:\Dim(\vv{u})}$ is the right half.

The \RealNVP{} posterior, derived from the original Gaussian posterior $q_{\phi}(\vv{w}|\vv{x})$, is denoted as $q_{\phi,\eta}(\vv{z}|\vv{x})$, and formulated as:
\begin{align*}
	q_{\phi,\eta}(\vv{z}|\vv{x}) &= q_{\phi}(\vv{w}|\vv{x}) \left|\frac{\partial f_{\eta}(\vv{w})}{\partial \vv{w}}\right|^{-1} \numberthis\label{eqn:posterior-rnvp-qz} \\
	\vv{z} &= f_{\eta}(\vv{w})
\end{align*}
where the structure of the \RealNVP{} mapping $f_{\eta}(\vv{w})$ for posterior is exactly the same as $f_{\lambda}(\vv{z})$ for prior.
The ELBO for VAE with \RealNVP{} posterior is then simply:
\begin{align*}
	\mathcal{L}(\vv{x};\lambda,\theta,\phi,\eta) = 
		\E_{q_{\phi}(\vv{w}|\vv{x})}\bigg[& \log p_{\theta}(\vv{x}|f_{\eta}(\vv{w})) + \log p_{\lambda}(f_{\eta}(\vv{w})) \\& - \log q_{\phi}(\vv{w}|\vv{x}) + \log \AbsDet{\frac{\partial f_{\eta}(\vv{w})}{\partial\vv{w}}} \bigg]
\end{align*}

\FloatBarrier

\subsection{Additional details of training and evaluation}
\label{sec:appendix-training-evaluation}
\paragraph{General methodology}

All the mathematical expressions of expectations \wrt{} some distributions are computed by Monte Carlo integration.
For example, $\E_{q_{\phi}(\vv{z}|\vv{x})}\left[f(\vv{z},\vv{x})\right]$ is estimated by:
\begin{equation*}
	\E_{q_{\phi}(\vv{z}|\vv{x})}\left[f(\vv{z},\vv{x})\right] \approx
		\frac{1}{L} \sum_{i=1}^L f(\vv{z}^{(i)},\vv{x}), \quad \text{where} \; \vv{z}^{(i)} \; \text{is one sample from} \; q_{\phi}(\vv{z}|\vv{x})
\end{equation*}

\paragraph{Training}

We use Adam~\citep{kingmaAdamMethodStochastic2014} to train our models.
The models are trained for 2,400 epochs.
The batch size is 128 for \DenseVAE{}, \ConvVAE{} and \ResnetVAE{}, and 64 for \PixelVAE{}.
On MNIST, FashionMNIST and Omniglot, we set the learning rate to be $10^{-3}$ in the first 800 epochs, $10^{-4}$ in the next 800 epochs, and $10^{-5}$ in the last 800 epochs.
On StaticMNIST, we set the learning rate to be $10^{-4}$ in the first 1,600 epochs, and $10^{-5}$ in the last 800 epochs. 

L2 regularization with factor $10^{-4}$ is applied on weights of all non-linear hidden layers, \ie{}, kernels of non-linear dense layers and convolutional layers, in $h_{\phi}(\vv{x})$, $h_{\theta}(\vv{z})$, $f_{\lambda}(\vv{z})$ and $f_{\eta}(\vv{w})$.

The ELBO is estimated by 1 $\vv{z}$ sample for each $\vv{x}$ in training.
We adopt warm-up (KL annealing)~\citep{bowmanGeneratingSentencesContinuous2016}.
The ELBO using warm-up is formulated as:
\begin{equation*}
	\mathcal{L}(\vv{x};\lambda,\theta,\phi)
    = \E_{q_{\phi}(\vv{z}|\vv{x})}\left[ \log p_{\theta}(\vv{x}|\vv{z}) + \beta \,\left(\log p_{\lambda}(\vv{z}) - \log q_{\phi}(\vv{z}|\vv{x})\right) \right] 
\end{equation*}
$\beta$ is increased from 0.01 to 1 linearly in the first 100 epochs, and it remains 1 afterwards.
The warm-up ELBO for VAEs with Real NVP priors and posteriors can be obtained by replacing $p_{\lambda}(\vv{z})$ and $q_{\phi}(\vv{z}|\vv{x})$ of the above equation by \cref{eqn:real-nvp-px} and \cref{eqn:posterior-rnvp-qz}, respectively.

We adopt early-stopping using NLL on validation set, to prevent over-fitting on StaticMNIST and \PixelVAE{}.
The validation NLL is estimated using 100 $\vv{z}$ samples for each $\vv{x}$, every 20 epochs.

\paragraph{Training strategies for $p_{\lambda}(\vv{z})$}

We consider three training strategies for optimizing \eqnref{eqn:elbo-rnvp-pz}:

\begin{itemize}
	\item \textbf{Post-hoc training}~\citep{bauerResampledPriorsVariational2018}: $q_{\phi}(\vv{z}|\vv{x})$ and $p_{\theta}(\vv{x}|\vv{z})$ are firstly trained \wrt{} the unit Gaussian prior, then $q_{\phi}(\vv{z}|\vv{x})$ and $p_{\theta}(\vv{x}|\vv{z})$ are fixed and $p_{\lambda}(\vv{z})$ is in turn optimized.  This is the most intuitive training method according to \eqnref{eqn:elbo-2}, however, it does not work  as well as \textit{joint training} in terms of test negative log-likelihood (NLL), which is observed both in our experiments (\cref{tbl:post-train-nll}) and by \citet{bauerResampledPriorsVariational2018}.
	\item \textbf{Joint training}~\citep{tomczakVAEVampPrior2018,bauerResampledPriorsVariational2018}: $p_{\lambda}(\vv{z})$ are jointly trained along with $q_{\phi}(\vv{z}|\vv{x})$ and $p_{\theta}(\vv{x}|\vv{z})$, by directly maximizing \eqnref{eqn:elbo-rnvp-pz}.
	\item \textbf{Iterative training}: Proposed by us, we alternate \textit{between} training $p_{\theta}(\vv{x}|\vv{z})$ \& $q_{\phi}(\vv{z}|\vv{x})$ \textit{and} training $p_{\lambda}(\vv{z})$, for multiple iterations.  The first iteration to train $p_{\theta}(\vv{x}|\vv{z})$ \& $q_{\phi}(\vv{z}|\vv{x})$ should use the unit Gaussian prior.  Early-stopping should be performed during the whole process if necessary.  See \cref{algo:iterative-training} for detailed procedure of this strategy.  We adopt this method mainly for investigating why \textit{post-hoc training} does not work as well as \textit{joint training}.
\end{itemize}

To train $p_{\lambda}(\vv{z})$ with \textit{post-hoc} strategy, we start from a trained VAE, adding \RealNVP{} prior onto it, and optimizing the Real NVP $f_{\lambda}(\vv{z})$ for 3,200 epochs, with learning rate set to $10^{-3}$ in the first 1,600 epochs, and $10^{-4}$ in the final 1,600 epochs.

\begin{algorithm}[h!]
	\caption{Pseudocode for iterative training.}
	\label{algo:iterative-training}

	\begin{algorithmic}
		\State \textbf{Iteration 1a}: Train $q_{\phi}(\vv{z}|\vv{x})$ and $p_{\theta}(\vv{x}|\vv{z})$, with $p_{\lambda}(\vv{z}) = \mathcal{N}(\vv{0},\vv{I})$.
		\State \textbf{Iteration 1b}: Train $p_{\lambda}(\vv{z})$ for $2M$ epochs, with fixed $q_{\phi}(\vv{z}|\vv{x})$ and $p_{\theta}(\vv{x}|\vv{z})$.
		\For{i = 2 \dots I}
			\State \textbf{Iteration \textit{i}a}: Train $q_{\phi}(\vv{z}|\vv{x})$ and $p_{\theta}(\vv{x}|\vv{z})$ for $M$ epochs, with fixed $p_{\lambda}(\vv{z})$.
			\State \textbf{Iteration \textit{i}b}: Train $p_{\lambda}(\vv{z})$ for $M$ epochs, with fixed $q_{\phi}(\vv{z}|\vv{x})$ and $p_{\theta}(\vv{x}|\vv{z})$.
		\EndFor
	\end{algorithmic}
\end{algorithm}

\cref{algo:iterative-training} is the pseudocode of \textit{iterative training} strategy.
For Iteration 1a, all hyper-parameters are the same with training a standard VAE, where in particular, the training epoch is set to 2,400.
For Iteration 1b, the learning rate is $10^{-3}$ for the first $M$ epochs, and is $10^{-4}$ for the next $M$ epochs.
For all the next iterations, learning rate is always $10^{-4}$.
The number of iterations $I$ is chosen to be 16, and the number of epochs $M$ is chosen to be 100, for MNIST, FashionMNIST and Omniglot.
For StaticMNIST, we find it overfits after only a few iterations, thus we choose $I$ to be 4, and $M$ to be 400.
With these hyper-parameters, $q_{\phi}(\vv{z}|\vv{x})$, $p_{\theta}(\vv{x}|\vv{z})$ and $p_{\lambda}(\vv{z})$ are \textit{iteratively trained} for totally 3,200 epochs on all datasets (starting from Iteration 1b), after the pre-training step (Iteration 1a).


%
\paragraph{Regularization term for $q_{\phi}(\vv{z})$}

In order to compare some metrics (\eg{}, the \textit{active units}) of VAE with \RealNVP{} prior to those of standard VAE, we introduce an additional regularization term for $q_{\phi}(\vv{z})$, such that $q_{\phi}(\vv{z})$ of VAE using \RealNVP{} prior would have roughly zero mean and unit variance, just as a standard VAE with unit Gaussian prior.
The regularization term for $q_{\phi}(\vv{z})$ (denoted as $\mathrm{Reg}\left[q_{\phi}(\vv{z})\right]$) and the final training objective augmented with the regularization term (denoted as $\widetilde{\mathcal{L}}(\vv{x};\lambda,\theta,\phi)$) is:
\begin{align*}
	\mathrm{Reg}\left[q_{\phi}(\vv{z})\right]
		&= \frac{1}{\Dim(\vv{z})} \sum_{k=1}^{\Dim(\vv{z})} \Big[\left(\mathrm{Mean}[z_k]\right)^2 + \left(\mathrm{Var}[z_k]-1\right)^2 \Big] \\
	\widetilde{\mathcal{L}}(\lambda,\theta,\phi) &= \E_{p^{\star}(\vv{x})}\,\mathcal{L}(\vv{x};\lambda,\theta,\phi) + \mathrm{Reg}\left[q_{\phi}(\vv{z})\right]
\end{align*}
where $z_k$ is the $k$-th dimension of $\vv{z}$, $\Dim(\vv{z})$ is the number of dimensions, $\mathrm{Mean}[z_k] = \E_{p^{\star}(\vv{x})}\,\E_{q_{\phi}(\vv{z}|\vv{x})}\left[z_k\right]$ and $\mathrm{Var}[z_k] = \E_{p^{\star}(\vv{x})}\,\E_{q_{\phi}(\vv{z}|\vv{x})}\left[\left(z_k -\mathrm{Mean}[z_k]\right)^2\right]$ are the mean and variance of each dimension.

\begin{table}[!h]
	\centering
	\caption{Avg. $\mathrm{Mean}[z_k]$ and $\mathrm{Var}[z_k]$ of regularized/un-regularized \ResnetVAE{} with \RealNVP{} prior.}\label{tbl:z_mean_var-ResnetVAE}
	\bgroup\setlength{\tabcolsep}{.75em}
    \begin{tabular}{ccccc}
    \toprule
& \multicolumn{2}{c}{\tablehr{regularized}}& \multicolumn{2}{c}{\tablehr{un-regularized}} \\
\cmidrule(lr){2-3}\cmidrule(lr){4-5}    \tablehr{Datasets} & Avg. $\mathrm{Mean}[z_k]$& Avg. $\mathrm{Var}[z_k]$ & Avg. $\mathrm{Mean}[z_k]$& Avg. $\mathrm{Var}[z_k]$ \\
    \midrule
        \StaticMNIST{}
 & 
        -0.02 $\pm$ 0.03 & 
        0.93 $\pm$ 0.01 & 
        0.05 $\pm$ 0.37 & 
        1.50 $\pm$ 0.02        \\
        \MNIST{}
 & 
        0.00 $\pm$ 0.00 & 
        0.98 $\pm$ 0.01 & 
        -0.06 $\pm$ 0.02 & 
        0.76 $\pm$ 0.04        \\
        \FashionMNIST{}
 & 
        0.00 $\pm$ 0.00 & 
        0.98 $\pm$ 0.00 & 
        0.00 $\pm$ 0.03 & 
        0.86 $\pm$ 0.07        \\
        \Omniglot{}
 & 
        0.00 $\pm$ 0.02 & 
        0.95 $\pm$ 0.00 & 
        0.00 $\pm$ 0.01 & 
        0.24 $\pm$ 0.01        \\
    \bottomrule
    \end{tabular}
            \egroup
\end{table}

\cref{tbl:z_mean_var-ResnetVAE} shows the average $\mathrm{Mean}[z_k]$ and $\mathrm{Var}[z_k]$ of \ResnetVAE{} with \RealNVP{} prior, computed on test data.
$\text{Average}\;\mathrm{Mean}[z_k]$ is defined as $\frac{1}{\Dim(\vv{z})} \sum_{k=1}^{\Dim(\vv{z})} \mathrm{Mean}[z_k]$, while $\text{average}\;\mathrm{Var}[z_k]$ is defined as $\frac{1}{\Dim(\vv{z})} \sum_{k=1}^{\Dim(\vv{z})} \mathrm{Var}[z_k]$.
The means and standard deviations of the above table is computed \wrt{} repeated experiments.
Using the regularization term for $q_{\phi}(\vv{z})$ makes the $\mathrm{Mean}[z_k]$ and $\mathrm{Var}[z_k]$ of $\vv{z}$ samples close to $\mathcal{N}(\vv{0},\vv{I})$, which is in sharp contrast with the un-regularized case.
For fair comparison in \cref{tbl:active-units-ResnetVAE,fig:distance-std}, it is crucial to have $\mathrm{Mean}[z_k]$ and $\mathrm{Var}[z_k]$ close to $\mathcal{N}(\vv{0},\vv{I})$.
Test NLLs are not reported here, because we find no significant difference between regularized and un-regularized models in terms of test NLLs.

\paragraph{Evaluation}

The \textit{negative log-likelihood} (NLL), the \textit{reconstruction loss} and the KL divergence $\E_{p^{\star}(\vv{x})}\,\KLD[q_{\phi}(\vv{z}|\vv{x})\|p_{\lambda}(\vv{z})]$ are estimated with 1000 $\vv{z}$ samples from $q_{\phi}(\vv{z}|\vv{x}^{(i)})$ for each $\vv{x}^{(i)}$ from test data:
\begin{align*}
	&\text{NLL} \approx \frac{1}{N} \sum_{i=1}^N \LogMeanExp_{j=1}^{1000}\left[ \log p_{\theta}(\vv{x}^{(i)}|\vv{z}^{(i,j)}) + \log p_{\lambda}(\vv{z}^{(i,j)}) - \log q_{\phi}(\vv{z}^{(i,j)}|\vv{x}^{(i)}) \right] \\
	&\text{Reconstruction Loss} \approx \frac{1}{1000 N} \sum_{i=1}^N \sum_{j=1}^{1000}\left[ \log p_{\theta}(\vv{x}^{(i)}|\vv{z}^{(i,j)}) \right] \\
	&\E_{p^{\star}(\vv{x})}\,\KLD[q_{\phi}(\vv{z}|\vv{x})\|p_{\lambda}(\vv{z})] \approx \frac{1}{1000 N} \sum_{i=1}^N \sum_{j=1}^{1000}\left[ \log q_{\phi}(\vv{z}^{(i,j)}|\vv{x}^{(i)}) - \log p_{\lambda}(\vv{z}^{(i,j)}) \right]
\end{align*}
where each $\vv{z}^{(i,j)}$ is one sample from $q_{\phi}(\vv{z}|\vv{x}^{(i)})$, and $\LogMeanExp_{j=1}^L \left[f(\vv{x}^{(i)},\vv{z}^{(i,j)})\right]$ is:
\begin{align*}
	\LogMeanExp_{j=1}^L\left[f(\vv{x}^{(i)},\vv{z}^{(i,j)})\right] &= 
		f_{max} + \log \frac{1}{L} \sum_{j=1}^L \left[ \exp\myleft(f(\vv{x}^{(i)},\vv{z}^{(i,j)}) - f_{max}\myright) \right] \\
		f_{max} &= \max_j f(\vv{x}^{(i)},\vv{z}^{(i,j)})
\end{align*}

\paragraph{Active units}

\textit{active units}~\citep{burdaImportanceWeightedAutoencoders2015} is defined as the number of latent dimensions whose variance is larger than 0.01.
The variance of the $k$-th dimension is formulated as:
\begin{equation*}
	\Var_k = \Var_{p^{\star}(\vv{x})}\left[ \E_{q_{\phi}(\vv{z}|\vv{x})}\left[z_k\right] \right]
\end{equation*}
where $z_k$ is the $k$-th dimension of $\vv{z}$.
We compute $\Var_k$ on the training data, while the inner expectation $\E_{q_{\phi}(\vv{z}|\vv{x})}\left[z_k\right]$ is estimated by drawing 1,000 samples of $\vv{z}$ for each $\vv{x}$. 

\FloatBarrier

\subsection[Formulation of closest pairs of q(z|x) and others]{Formulation of closest pairs of $q_{\phi}(\vv{z}|\vv{x})$ and others}
\label{sec:appendix-closest-pairs-formulation}
\paragraph{Closest pairs of $q_{\phi}(\vv{z}|\vv{x})$}

For each $\vv{x}^{(i)}$ from training data, we find the training point $\vv{x}^{(j)}$, whose posterior $q_{\phi}(\vv{z}|\vv{x}^{(j)})$ is the closest neighbor to $q_{\phi}(\vv{z}|\vv{x}^{(i)})$:
\begin{equation*}
j = \argmin_{j \neq i} \|\vv{\mu}_{\phi}(\vv{x}^{(j)}) - \vv{\mu}_{\phi}(\vv{x}^{(i)})\|
\end{equation*}
where $\vv{\mu}_{\phi}(\vv{x})$ is the mean of $q_{\phi}(\vv{z}|\vv{x})$, and $\|\cdot\|$ is the L2 norm.
These two posteriors $q_{\phi}(\vv{z}|\vv{x}^{(i)})$ and $q_{\phi}(\vv{z}|\vv{x}^{(j)})$ are called a closest pair of $q_{\phi}(\vv{z}|\vv{x})$.

\paragraph{Distance of a pair of $q_{\phi}(\vv{z}|\vv{x})$}

The distance $d_{ij}$ of a closest pair $q_{\phi}(\vv{z}|\vv{x}^{(i)})$ and $q_{\phi}(\vv{z}|\vv{x}^{(j)})$ is:
\begin{align*}
	\vv{d}_{ij} &= \vv{\mu}_{\phi}(\vv{x}^{(j)}) - \vv{\mu}_{\phi}(\vv{x}^{(i)}) \\
	d_{ij} &= \|\vv{d}_{ij}\|
\end{align*}

\paragraph{Normalized distance of a pair of $q_{\phi}(\vv{z}|\vv{x})$}

For each closest pair $q_{\phi}(\vv{z}|\vv{x}^{(i)})$ and $q_{\phi}(\vv{z}|\vv{x}^{(j)})$, we compute its \textit{normalized distance} $\widetilde{d_{ij}}$ by:
\begin{equation}
	\widetilde{d_{ij}} = \frac{2 d_{ij}}{\mathrm{Std}[i;j] + \mathrm{Std}[j;i]}
\end{equation}
$\mathrm{Std}[i;j]$ is formulated as:
\begin{equation}
	\mathrm{Std}[i;j] = \sqrt{\Var_{q_{\phi}(\vv{z}|\vv{x}^{(i)})}\left[\left(\vv{z} - \vv{\mu}_{\phi}(\vv{x}^{(i)})\right) \cdot \frac{\vv{d}_{ij}}{d_{ij}} \right]}
\end{equation}
where $\Var_{q_{\phi}(\vv{z}|\vv{x}^{(i)})}[f(\vv{z})]$ is the variance of $f(\vv{z})$ \wrt{} $q_{\phi}(\vv{z}|\vv{x}^{(i)})$.  We use 1,000 samples to estimate each $\mathrm{Std}[i;j]$.
Roughly speaking, the \textit{normalized distance} $\widetilde{d_{ij}}$ can be viewed as ``distance/std'' along the direction of $\vv{d}_{ij}$, which indicates the scale of the ``hole'' between $q_{\phi}(\vv{z}|\vv{x}^{(i)})$ and $q_{\phi}(\vv{z}|\vv{x}^{(j)})$.

\FloatBarrier

\subsection[Discussions about KL(q(z)||p(z))]{Discussions about $\KLD[q_{\phi}(\vv{z})\|p_{\lambda}(\vv{z})]$}
\label{sec:appendix-kl-qz-pz}
The KL divergence between the \AggregatedPosteriorI{} and the prior, \ie{}, $\KLD[q_{\phi}(\vv{z})\|p_{\lambda}(\vv{z})]$, is first analyzed by \citet{hoffmanElboSurgeryAnother2016} as one component of the ELBO decomposition \eqref{eqn:elbo-2}.
Since $q_{\phi}(\vv{z}) = \int q_{\phi}(\vv{z}|\vv{x})\,p^{\star}(\vv{x})\,\dd{\vv{x}}$ and $p_{\lambda}(\vv{z}) = \int p_{\theta}(\vv{z}|\vv{x})\,p_{\theta}(\vv{x})\,\dd{\vv{x}}$, \citet{roscaDistributionMatchingVariational2018} used this KL divergence as a metric to quantify the approximation quality of both $p_{\theta}(\vv{x})$ to $p^{\star}(\vv{x})$, and $q_{\phi}(\vv{z}|\vv{x})$ to $p_{\theta}(\vv{z}|\vv{x})$.
As we are focusing on learning the prior, $\KLD[q_{\phi}(\vv{z})\|p_{\lambda}(\vv{z})]$ can in turn be a metric for quantifying whether $p_{\lambda}(\vv{z})$ is close enough to the \AggregatedPosteriorI{} $q_{\phi}(\vv{z})$.

The evaluation of $\KLD[q_{\phi}(\vv{z})\|p_{\lambda}(\vv{z})]$, however, is not an easy task.
One way to estimate the KL divergence of two arbitrary distributions is the density ratio trick~\citep{sugiyamaDensityRatioEstimation2012,meschederAdversarialVariationalBayes2017}, where $\KLD[q_{\phi}(\vv{z})\|p_{\lambda}(\vv{z})]$ is estimated by a separately trained neural network classifier.
However, \citet{roscaDistributionMatchingVariational2018} revealed that such approach can under-estimate $\KLD[q_{\phi}(\vv{z})\|p_{\lambda}(\vv{z})]$, and the training may even diverge when $\vv{z}$ has hundreds of or more dimensions.

Another approach is to directly estimate $\KLD[q_{\phi}(\vv{z})\|p_{\lambda}(\vv{z})]$ by Monte Carlo integration.  The KL divergence can be rewritten into the following form:
\begin{align*}
	\KLD[q_{\phi}(\vv{z})\|p_{\lambda}(\vv{z})] 
	&= \int q_{\phi}(\vv{z}) \log \frac{q_{\phi}(\vv{z})}{p_{\lambda}(\vv{z})}\,\dd{\vv{z}} \\
	&= \int\left(\int q_{\phi}(\vv{z}|\vv{x})\,p^{\star}(\vv{x})\,\dd{\vv{x}}\right)	\log \frac{\int q_{\phi}(\vv{z}|\vv{x}')\,p^{\star}(\vv{x}')\,\dd{\vv{x}'}}{p_{\lambda}(\vv{z})}\,\dd{\vv{z}} \\
	&= \int p^{\star}(\vv{x}) \int q_{\phi}(\vv{z}|\vv{x}) \log \frac{\int q_{\phi}(\vv{z}|\vv{x}')\,p^{\star}(\vv{x}')\,\dd{\vv{x}'}}{p_{\lambda}(\vv{z})}\,\dd{\vv{z}}\,\dd{\vv{x}} \\
	&= \E_{p^{\star}(\vv{x})}\,\E_{q_{\phi}(\vv{z}|\vv{x})} \left[ \log \E_{p^{\star}(\vv{x}')}\left[q_{\phi}(\vv{z}|\vv{x}')\right] - \log p_{\lambda}(\vv{z}) \right] \numberthis\label{eqn:kl-qz-pz-expectation-form}
\end{align*}
\citet{roscaDistributionMatchingVariational2018} has already proposed a Monte Carlo based algorithm to estimate $\KLD[q_{\phi}(\vv{z})\|p_{\lambda}(\vv{z})]$, in case that the training data is statically binarized.
Since we mainly use dynamically binarized datasets, we slightly modified the algorithm according to \cref{eqn:kl-qz-pz-expectation-form}, to allow sampling multiple $\vv{x}$ from each image.
Our algorithm is:
\begin{algorithm}[h!]
	\caption{Pseudocode for estimating $\KLD[q_{\phi}(\vv{z})\|p_{\lambda}(\vv{z})]$ (denoted as marginal\_kl) on dynamically binarized dataset, where $\vv{\mu}$ is the pixel intensities of each original image.}
	\label{algo:estimate-kl-qz-pz}

	\begin{algorithmic}
		\State x\_samples = []
		
		\For{$\vv{\mu}$ in training dataset}
			\For{$i = 1 \dots n_x$}
				\State sample $\vv{x}$ from $\mathrm{Bernoulli}\myleft(\vv{\mu}\myright)$
				\State append $\vv{x}$ to x\_samples
			\EndFor
		\EndFor
		\State marginal\_kl = 0
		\For{$\vv{x}$ in x\_samples}
			\For{$i = 1 \dots n_z$}
				\State sample $\vv{z}$ from $q_{\phi}(\vv{z}|\vv{x})$
				\State posterior\_list = []
				\For{$\vv{x}'$ in x\_samples}
					\State append $\log q_{\phi}(\vv{z}|\vv{x}')$ to posterior\_list
				\EndFor
				\State $\log q_{\phi}(\vv{z}) = \mathrm{LogMeanExp}\myleft(\text{posterior\_list}\myright)$
				\State $\text{marginal\_kl} = \text{marginal\_kl} + \log q_{\phi}(\vv{z}) - \log p_{\lambda}(\vv{z})$
			\EndFor
		\EndFor
		\State $\text{marginal\_kl} = \text{marginal\_kl} / (\mathrm{len}(\text{x\_samples}) \times n_z)$
	\end{algorithmic}
\end{algorithm}

Surprisingly, we find that increasing $n_x$ will cause the estimated $\KLD[q_{\phi}(\vv{z})\|p_{\lambda}(\vv{z})]$ to decrease on MNIST, see \cref{tbl:kl-qz-pz-MNIST}.
Given that our $n_z = 10$ for all $n_x$, the number of our sampled $\vv{z}$ (even when $n_x = 1$) is $10 \times 60,000 = 6 \times 10^5$, which should not be too small, since \citet{roscaDistributionMatchingVariational2018} only used $10^6$ $\vv{z}$ in their experiments.
When $n_x = 8$, the number of $\vv{z}$ is $8 \times 10 \times 60,000 = 4.8 \times 10^6$, which is 4.8x larger than \citet{roscaDistributionMatchingVariational2018}, not to mention the inner expectation $\E_{p^{\star}(\vv{x}')}\left[q_{\phi}(\vv{z}|\vv{x}')\right]$ is estimated with 8x larger number of $\vv{x}'$.
There should be in total 38.4x larger number of $\log q_{\phi}(\vv{z}|\vv{x}')$ computed for estimating $\KLD[q_{\phi}(\vv{z})\|p_{\lambda}(\vv{z})]$ than \citet{roscaDistributionMatchingVariational2018}, which has already costed about 3 days on 4 GTX 1080 Ti graphical cards.
We believe such a large number of Monte Carlo samples should be sufficient for any well-defined algorithm.

\begin{table}[!h]
	\centering
	\caption{$\KLD[q_{\phi}(\vv{z})\|p_{\lambda}(\vv{z})]$ of a \ResnetVAE{} with \RealNVP{} prior trained on MNIST, estimated by \cref{algo:estimate-kl-qz-pz}.  $n_z = 10$ for all $n_x$.  We only tried $n_x$ for up to 8, due to the growing computation time of $O\myleft(n_x^2\myright)$.}\label{tbl:kl-qz-pz-MNIST}
	\bgroup\setlength{\tabcolsep}{.75em}\begin{tabular}{cccccc}
  \toprule
    $n_x$ & 1 & 2 & 3 & 5 & 8 \\
    \midrule
  $\KLD[q_{\phi}(\vv{z})\|p_{\lambda}(\vv{z})]$
  &
        15.279
 &
        14.623
 &
        14.254
 &
        13.796
 &
		13.392
    \\
  \bottomrule
\end{tabular}
\egroup
\end{table}

According to the above observation, we suspect there must be some flaw in \cref{algo:estimate-kl-qz-pz}.
Because of this, we do not adopt \cref{algo:estimate-kl-qz-pz} to estimate $\KLD[q_{\phi}(\vv{z})\|p_{\lambda}(\vv{z})]$.

Since no mature method has been published to estimate $\KLD[q_{\phi}(\vv{z})\|p_{\lambda}(\vv{z})]$ yet, we decide not to use $\KLD[q_{\phi}(\vv{z})\|p_{\lambda}(\vv{z})]$ to measure how our learned $p_{\lambda}(\vv{z})$ approximates the \AggregatedPosteriorI{} $q_{\phi}(\vv{z})$.

%
%
%

\FloatBarrier

\subsection{Additional quantitative results}
\label{sec:appendix-quantitative-results}
\begin{table}[!h]
	\centering
	\caption{Test NLL of different models, with both prior and posterior being Gaussian (``standard''), only posterior being Real NVP (``RNVP $q(z|x)$''), and only prior being Real NVP (``RNVP $p(z)$'').  Flow depth $K=20$.}\label{tbl:main-experiments-nll-with-std}
	\small
	\bgroup\setlength{\tabcolsep}{.5em}
\begin{tabular}{cccccc}
  \toprule
  & & \multicolumn{4}{c}{\tablehr{Datasets}} \\
  \cmidrule(lr){3-6}
  \multicolumn{2}{c}{\tablehr{Models}}  & \StaticMNIST{}  & \MNIST{}  & \FashionMNIST{}  & \Omniglot{}  \\
  \midrule
    \multirow{3}{*}{\DenseVAE{}} &
      standard 
 &
88.84 $\pm$ 0.05 &
84.48 $\pm$ 0.03 &
228.60 $\pm$ 0.03 &
106.42 $\pm$ 0.14      \\
 &       RNVP \(q(z|x)\) 
 &
86.07 $\pm$ 0.11 &
82.53 $\pm$ 0.00 &
227.79 $\pm$ 0.01 &
102.97 $\pm$ 0.06      \\
 &       RNVP \(p(z)\) 
 &
\textbf{84.87 $\pm$ 0.05} &
\textbf{80.43 $\pm$ 0.01} &
\textbf{226.11 $\pm$ 0.02} &
\textbf{102.19 $\pm$ 0.12}      \\
\cmidrule(lr){1-6}    \multirow{3}{*}{\ConvVAE{}} &
      standard 
 &
83.63 $\pm$ 0.01 &
82.14 $\pm$ 0.01 &
227.51 $\pm$ 0.08 &
97.87 $\pm$ 0.02      \\
 &       RNVP \(q(z|x)\) 
 &
81.11 $\pm$ 0.03 &
80.09 $\pm$ 0.01 &
226.03 $\pm$ 0.00 &
94.90 $\pm$ 0.03      \\
 &       RNVP \(p(z)\) 
 &
\textbf{80.06 $\pm$ 0.07} &
\textbf{78.67 $\pm$ 0.01} &
\textbf{224.65 $\pm$ 0.01} &
\textbf{93.68 $\pm$ 0.01}      \\
\cmidrule(lr){1-6}    \multirow{3}{*}{\ResnetVAE{}} &
      standard 
 &
82.95 $\pm$ 0.09 &
81.07 $\pm$ 0.03 &
226.17 $\pm$ 0.05 &
96.99 $\pm$ 0.04      \\
 &       RNVP \(q(z|x)\) 
 &
80.97 $\pm$ 0.05 &
79.53 $\pm$ 0.03 &
225.02 $\pm$ 0.01 &
94.30 $\pm$ 0.02      \\
 &       RNVP \(p(z)\) 
 &
\textbf{79.99 $\pm$ 0.02} &
\textbf{78.58 $\pm$ 0.01} &
\textbf{224.09 $\pm$ 0.01} &
\textbf{93.61 $\pm$ 0.04}      \\
\cmidrule(lr){1-6}    \multirow{3}{*}{\PixelVAE{}} &
      standard 
 &
79.47 $\pm$ 0.02 &
78.64 $\pm$ 0.02 &
224.22 $\pm$ 0.06 &
89.83 $\pm$ 0.04      \\
 &       RNVP \(q(z|x)\) 
 &
79.09 $\pm$ 0.01 &
78.41 $\pm$ 0.01 &
223.81 $\pm$ 0.00 &
89.69 $\pm$ 0.01      \\
 &       RNVP \(p(z)\) 
 &
\textbf{78.92 $\pm$ 0.02} &
\textbf{78.15 $\pm$ 0.04} &
\textbf{223.40 $\pm$ 0.07} &
\textbf{89.61 $\pm$ 0.03}      \\
  \bottomrule
\end{tabular}
        \egroup
\end{table}

\begin{table}[!h]
	\centering
	\caption{Test NLL of \ResnetVAE{}, with only Real NVP posterior (``RNVP $q(z|x)$''), only Real NVP prior (``RNVP $p(z)$''), and both Real NVP prior \& posterior (``both'').  Flow depth $K=20$.}\label{tbl:dual-rnvp-nll-with-std}
    \small
	\bgroup\setlength{\tabcolsep}{.5em}
\begin{tabular}{cccc}
  \toprule
  & \multicolumn{3}{c}{\tablehr{\ResnetVAE}} \\
  \cmidrule(lr){2-4}
  \tablehr{Datasets}  & \parbox[c]{3em}{RNVP \\ \centering \(q(z|x)\)}  & \parbox[c]{3em}{RNVP \\ \centering \(p(z)\)}  & both  \\
  \midrule
    \StaticMNIST{} 
 &
80.97 $\pm$ 0.05 &
79.99 $\pm$ 0.02 &
\textbf{79.87 $\pm$ 0.04}    \\
    \MNIST{} 
 &
79.53 $\pm$ 0.03 &
78.58 $\pm$ 0.01 &
\textbf{78.56 $\pm$ 0.01}    \\
    \FashionMNIST{} 
 &
225.02 $\pm$ 0.01 &
224.09 $\pm$ 0.01 &
\textbf{224.08 $\pm$ 0.02}    \\
    \Omniglot{} 
 &
94.30 $\pm$ 0.02 &
\textbf{93.61 $\pm$ 0.04} &
93.68 $\pm$ 0.04    \\
  \bottomrule
\end{tabular}
        \egroup
\end{table}

\begin{table}[!h]
	\centering
	\caption{Test NLL of \ResnetVAE{}, with Real NVP prior of different flow depth.}
    \small
    \bgroup\setlength{\tabcolsep}{.5em}
\begin{tabular}{ccccc}
  \toprule
  & \multicolumn{4}{c}{\tablehr{Datasets}} \\
  \cmidrule(lr){2-5}
  \tablehr{Flow depth}  & \StaticMNIST{}  & \MNIST{}  & \FashionMNIST{}  & \Omniglot{}  \\
  \midrule
    0 
 &
82.95 $\pm$ 0.09 &
81.07 $\pm$ 0.03 &
226.17 $\pm$ 0.05 &
96.99 $\pm$ 0.04    \\
    1 
 &
81.76 $\pm$ 0.04 &
80.02 $\pm$ 0.02 &
225.27 $\pm$ 0.03 &
96.20 $\pm$ 0.06    \\
    2 
 &
81.30 $\pm$ 0.02 &
79.58 $\pm$ 0.02 &
224.78 $\pm$ 0.02 &
95.35 $\pm$ 0.06    \\
    5 
 &
80.64 $\pm$ 0.06 &
79.09 $\pm$ 0.02 &
224.37 $\pm$ 0.01 &
94.47 $\pm$ 0.01    \\
    10 
 &
80.26 $\pm$ 0.05 &
78.75 $\pm$ 0.01 &
224.18 $\pm$ 0.01 &
93.92 $\pm$ 0.02    \\
    20 
 &
79.99 $\pm$ 0.02 &
78.58 $\pm$ 0.01 &
224.09 $\pm$ 0.01 &
93.61 $\pm$ 0.04    \\
    30 
 &
79.90 $\pm$ 0.05 &
78.52 $\pm$ 0.01 &
\textbf{224.07 $\pm$ 0.01} &
93.53 $\pm$ 0.02    \\
    50 
 &
\textbf{79.84 $\pm$ 0.04} &
\textbf{78.49 $\pm$ 0.01} &
\textbf{224.07 $\pm$ 0.01} &
\textbf{93.52 $\pm$ 0.02}    \\
  \bottomrule
\end{tabular}
        \egroup
\end{table}

\newcommand{
\begin{table}[!h]
	\centering
	\caption{Test NLL on .  ``$^\TwoZMark$'' and ``$^\ThreeOrMoreZMark$'' has the same meaning as \cref{tbl:static-mnist-nll-comparison}.}\label{tbl:appendix--nll-comparison}
	\begin{minipage}{.65\linewidth}
	    \small
		\input{metrics/dist/compare_previous/_nll_with_std}
	\end{minipage}
\end{table}
}[2]{
\begin{table}[!h]
	\centering
	\caption{Test NLL on #2.  ``$^\TwoZMark$'' and ``$^\ThreeOrMoreZMark$'' has the same meaning as \cref{tbl:static-mnist-nll-comparison}.}\label{tbl:appendix-#1-nll-comparison}
	\begin{minipage}{.65\linewidth}
	    \small
		\input{metrics/dist/compare_previous/#1_nll_with_std}
	\end{minipage}
\end{table}
}

\begin{table}[!h]
	\centering
	\caption{Test NLL on \StaticMNIST{}.  ``$^\TwoZMark$'' and ``$^\ThreeOrMoreZMark$'' has the same meaning as \cref{tbl:static-mnist-nll-comparison}.}\label{tbl:appendix-static_mnist-nll-comparison}
	\begin{minipage}{.65\linewidth}
	    \small
		
\begin{tabularx}{\linewidth}{Xr}
  \toprule
  Model & NLL \\
    \midrule
\multicolumn{2}{l}{\textit{Models without PixelCNN decoder}} \\    ConvHVAE + Lars prior$^\TwoZMark$~[\citenum{bauerResampledPriorsVariational2018}] & 81.70 \\
    ConvHVAE + VampPrior$^\TwoZMark$~[\citenum{tomczakVAEVampPrior2018}] & 81.09 \\
    VAE + IAF$^\ThreeOrMoreZMark$~[\citenum{kingmaImprovedVariationalInference2016}] & 79.88 \\
    BIVA$^\ThreeOrMoreZMark$~[\citenum{maaloeBIVAVeryDeep2019}] & \textbf{78.59} \\
    \textit{Our \ConvVAE{} + RNVP $p(z)$, $K=50$} & 80.09 $\pm$ 0.01 \\
    \textit{Our \ResnetVAE{} + RNVP $p(z)$, $K=50$} & 79.84 $\pm$ 0.04 \\
    \midrule
\multicolumn{2}{l}{\textit{Models with PixelCNN decoder}} \\    VLAE$^\ThreeOrMoreZMark$[\citenum{chenVariationalLossyAutoencoder2016}] & 79.03 \\
    PixelHVAE + VampPrior$^\TwoZMark$~[\citenum{tomczakVAEVampPrior2018}] & 79.78 \\
    \textit{Our \PixelVAE{} + RNVP $p(z)$, $K=50$} & \textbf{79.01 $\pm$ 0.03} \\
  \bottomrule
\end{tabularx}
        
	\end{minipage}
\end{table}

\begin{table}[!h]
	\centering
	\caption{Test NLL on \MNIST{}.  ``$^\TwoZMark$'' and ``$^\ThreeOrMoreZMark$'' has the same meaning as \cref{tbl:static-mnist-nll-comparison}.}\label{tbl:appendix-mnist-nll-comparison}
	\begin{minipage}{.65\linewidth}
	    \small
		
\begin{tabularx}{\linewidth}{Xr}
  \toprule
  Model & NLL \\
    \midrule
\multicolumn{2}{l}{\textit{Models without PixelCNN decoder}} \\    ConvHVAE + Lars prior$^\TwoZMark$~[\citenum{bauerResampledPriorsVariational2018}] & 80.30 \\
    ConvHVAE + VampPrior$^\TwoZMark$~[\citenum{tomczakVAEVampPrior2018}] & 79.75 \\
    VAE + IAF$^\ThreeOrMoreZMark$~[\citenum{kingmaImprovedVariationalInference2016}] & 79.10 $\pm$ 0.07 \\
    BIVA$^\ThreeOrMoreZMark$~[\citenum{maaloeBIVAVeryDeep2019}] & \textbf{78.41} \\
    \textit{Our \ConvVAE{} + RNVP $p(z)$, $K=50$} & 78.61 $\pm$ 0.01 \\
    \textit{Our \ResnetVAE{} + RNVP $p(z)$, $K=50$} & 78.49 $\pm$ 0.01 \\
    \midrule
\multicolumn{2}{l}{\textit{Models with PixelCNN decoder}} \\    VLAE$^\ThreeOrMoreZMark$~[\citenum{chenVariationalLossyAutoencoder2016}] & 78.53 \\
    PixelVAE$^\TwoZMark$~[\citenum{gulrajaniPixelvaeLatentVariable2016}] & 79.02 \\
    PixelHVAE + VampPrior$^\TwoZMark$~[\citenum{tomczakVAEVampPrior2018}] & 78.45 \\
    \textit{Our \PixelVAE{} + RNVP $p(z)$, $K=50$} & \textbf{78.12 $\pm$ 0.04} \\
  \bottomrule
\end{tabularx}
        
	\end{minipage}
\end{table}

\begin{table}[!h]
	\centering
	\caption{Test NLL on \Omniglot{}.  ``$^\TwoZMark$'' and ``$^\ThreeOrMoreZMark$'' has the same meaning as \cref{tbl:static-mnist-nll-comparison}.}\label{tbl:appendix-omniglot-nll-comparison}
	\begin{minipage}{.65\linewidth}
	    \small
		
\begin{tabularx}{\linewidth}{Xr}
  \toprule
  Model & NLL \\
    \midrule
\multicolumn{2}{l}{\textit{Models without PixelCNN decoder}} \\    ConvHVAE + Lars prior$^\TwoZMark$~[\citenum{bauerResampledPriorsVariational2018}] & 97.08 \\
    ConvHVAE + VampPrior$^\TwoZMark$~[\citenum{tomczakVAEVampPrior2018}] & 97.56 \\
    BIVA$^\ThreeOrMoreZMark$~[\citenum{maaloeBIVAVeryDeep2019}] & \textbf{91.34} \\
    \textit{Our \ConvVAE{} + RNVP $p(z)$, $K=50$} & 93.62 $\pm$ 0.02 \\
    \textit{Our \ResnetVAE{} + RNVP $p(z)$, $K=50$} & 93.52 $\pm$ 0.02 \\
    \midrule
\multicolumn{2}{l}{\textit{Models with PixelCNN decoder}} \\    VLAE$^\ThreeOrMoreZMark$~[\citenum{chenVariationalLossyAutoencoder2016}] & 89.83 \\
    PixelHVAE + VampPrior$^\TwoZMark$~[\citenum{tomczakVAEVampPrior2018}] & 89.76 \\
    \textit{Our \PixelVAE{} + RNVP $p(z)$, $K=50$} & \textbf{89.60 $\pm$ 0.01} \\
  \bottomrule
\end{tabularx}
        
	\end{minipage}
\end{table}

\begin{table}[!h]
	\centering
	\caption{Test NLL on \FashionMNIST{}.  ``$^\TwoZMark$'' and ``$^\ThreeOrMoreZMark$'' has the same meaning as \cref{tbl:static-mnist-nll-comparison}.}\label{tbl:appendix-fashion_mnist-nll-comparison}
	\begin{minipage}{.65\linewidth}
	    \small
		
\begin{tabularx}{\linewidth}{Xr}
  \toprule
  Model & NLL \\
    \midrule
\multicolumn{2}{l}{\textit{Models without PixelCNN decoder}} \\    ConvHVAE + Lars prior$^\TwoZMark$~[\citenum{bauerResampledPriorsVariational2018}] & 225.92 \\
    \textit{Our \ConvVAE{} + RNVP $p(z)$, $K=50$} & 224.64 $\pm$ 0.01 \\
    \textit{Our \ResnetVAE{} + RNVP $p(z)$, $K=50$} & \textbf{224.07 $\pm$ 0.01} \\
    \midrule
\multicolumn{2}{l}{\textit{Models with PixelCNN decoder}} \\    \textit{Our \PixelVAE{} + RNVP $p(z)$, $K=50$} & \textbf{223.36 $\pm$ 0.06} \\
  \bottomrule
\end{tabularx}
        
	\end{minipage}
\end{table}

\begin{table}[!h]
	\centering
	\caption{Test NLL of \ResnetVAE{}, with prior trained by: \textit{joint} training, \textit{iterative} training, \textit{post-hoc} training, and standard VAE (``none'') as reference.  Flow depth $K=20$.}
    \small
    \bgroup\setlength{\tabcolsep}{.5em}
\begin{tabular}{ccccc}
  \toprule
  & \multicolumn{4}{c}{\tablehr{\ResnetVAE}} \\
  \cmidrule(lr){2-5}
  \tablehr{Datasets}  & joint  & iterative  & post-hoc  & none  \\
  \midrule
    \StaticMNIST{} 
 &
\textbf{79.99 $\pm$ 0.02} &
80.63 $\pm$ 0.02 &
80.86 $\pm$ 0.04 &
82.95 $\pm$ 0.09    \\
    \MNIST{} 
 &
\textbf{78.58 $\pm$ 0.01} &
79.61 $\pm$ 0.01 &
79.90 $\pm$ 0.04 &
81.07 $\pm$ 0.03    \\
    \FashionMNIST{} 
 &
\textbf{224.09 $\pm$ 0.01} &
224.88 $\pm$ 0.02 &
225.22 $\pm$ 0.01 &
226.17 $\pm$ 0.05    \\
    \Omniglot{} 
 &
\textbf{93.61 $\pm$ 0.04} &
94.43 $\pm$ 0.11 &
94.87 $\pm$ 0.05 &
96.99 $\pm$ 0.04    \\
  \bottomrule
\end{tabular}
        \egroup
\end{table}

\FloatBarrier

\subsection{Additional qualitative results}
\label{sec:appendix-qualitative-results}

\newcommand{\SampleDetails}[3]{
\begin{figure}[!h]
	\centering
	\subfigure[standard]{
	\begin{minipage}{0.32\linewidth}
		\flushleft
		\includegraphics[width=\linewidth]{figures/samples/#2_#1_Normal_Normal_10x10.png}
		\vspace{.1em}
	\end{minipage}}\hfill
	\subfigure[\RealNVP{} $q(z|x)$]{\begin{minipage}{0.32\linewidth}
		\centering
		\includegraphics[width=\linewidth]{figures/samples/#2_#1_Normal_RNVP_10x10.png}
		\vspace{.1em}
	\end{minipage}}\hfill
	\subfigure[\RealNVP{} $p(z)$]{\begin{minipage}{0.32\linewidth}
		\flushright
		\includegraphics[width=\linewidth]{figures/samples/#2_#1_RNVP_Normal_10x10.png}
		\vspace{.1em}
	\end{minipage}}
	\caption{Samples from #3 trained on #1.  \TheLastColumnSentence{10x10}}
\end{figure}
}

\SampleDetails{MNIST}{ResnetVAE}{\ResnetVAE{}}
\SampleDetails{FashionMNIST}{ResnetVAE}{\ResnetVAE{}}
\SampleDetails{Omniglot}{ResnetVAE}{\ResnetVAE{}}


\FloatBarrier

\subsection{Additional results: improved reconstruction loss and other experimental results with learned prior}
\label{sec:appendix-influence-of-learning-the-prior}
\newcommand{\KLReconsTable}[2]{
  \begin{table}[!h]
	\centering
	\caption{Average test \ELBOTerm{}, \ReconsTerm{} (``\textit{recons}''), \KLTerm{} (``\textit{kl}'') and \KLzxTerm{} (``$kl_{z|x}$'') of various #2.  Flow depth $K=20$.}\label{tbl:kl-recons-2-#1}
	\small
	\bgroup\setlength{\tabcolsep}{.65em}\input{metrics/dist/kl_recons_2/kl_recons_#1}\egroup
  \end{table}
}

\KLReconsTable{DenseVAE}{\DenseVAE{}}
\KLReconsTable{ConvVAE}{\ConvVAE{}}
\KLReconsTable{ResnetVAE}{\ResnetVAE{}}
\KLReconsTable{PixelVAE}{\PixelVAE{}}

\FloatBarrier

\subsection{Additional results: learned prior on low posterior samples}
\label{sec:appendix-learned-prior-on-low-posterior-samples}
\citet{roscaDistributionMatchingVariational2018} proposed an algorithm to obtain \textit{low posterior samples} ($\vv{z}$ samples which have low likelihoods on $q_{\phi}(\vv{z})$) from a trained VAE, and plotted the histograms of $\log p_{\lambda}(\vv{z})$ evaluated on these $\vv{z}$ samples.
Their algorithm first samples a large number of $\vv{z}$ from the prior $p_{\lambda}(\vv{z})$, then uses Monte Carlo estimator to evaluate $q_{\phi}(\vv{z})$ on these $\vv{z}$ samples, and finally chooses a certain number of $\vv{z}$ with the lowest $q_{\phi}(\vv{z})$ likelihoods as the \textit{low posterior samples}.
They also sampled one $\vv{x}$ from $p_{\theta}(\vv{x}|\vv{z})$ for each low posterior sample $\vv{z}$, plotted the sample means of these $\vv{x}$ and the histograms of ELBO on these $\vv{x}$.
Although we have found their Monte Carlo estimator for $\KLD[q_{\phi}(\vv{z})\|p_{\lambda}(\vv{z})]$ vulnerable (see \cref{sec:appendix-kl-qz-pz}), their \textit{low posterior samples} algorithm only ranks $q_{\phi}(\vv{z})$ for each $\vv{z}$, and the visual results of their algorithm seems plausible.
Thus we think this algorithm should be still convincing enough, and we also use it to obtain \textit{low posterior samples}.

\newcommand{\theStandardVAE}{\textit{standard} \ResnetVAE{}}
\newcommand{\thePosthocVAE}{\textit{post-hoc trained} \ResnetVAE{}}
\newcommand{\theJointVAE}{\textit{jointly trained} \ResnetVAE{}}

To compare the learned prior with unit Gaussian prior on such \textit{low posterior samples}, we first train a \ResnetVAE{} with unit Gaussian prior (denoted as \theStandardVAE{}), and then add a \textit{post-hoc trained} \RealNVP{} prior upon this original \ResnetVAE{} (denoted as \thePosthocVAE{}).
We then obtain 10,000 $\vv{z}$ samples from the \theStandardVAE{}, and choose 100 $\vv{z}$ with the lowest $q_{\phi}(\vv{z})$ among these 10,000 samples, evaluated on \theStandardVAE{}.
Fixing these 100 $\vv{z}$ samples, we plot the histograms of $\log p_{\lambda}(\vv{z})$ \wrt{} \theStandardVAE{} and \thePosthocVAE{}.
We also obtain one $\vv{x}$ sample from $p_{\theta}(\vv{x}|\vv{z})$ for each $\vv{z}$, and plot the histograms of ELBO for each $\vv{x}$ (\wrt{} the two models) and their sample means.
See \cref{fig:low-posterior-samples}.

\begin{figure}[!h]
	\centering
	\includegraphics[width=.8\linewidth]{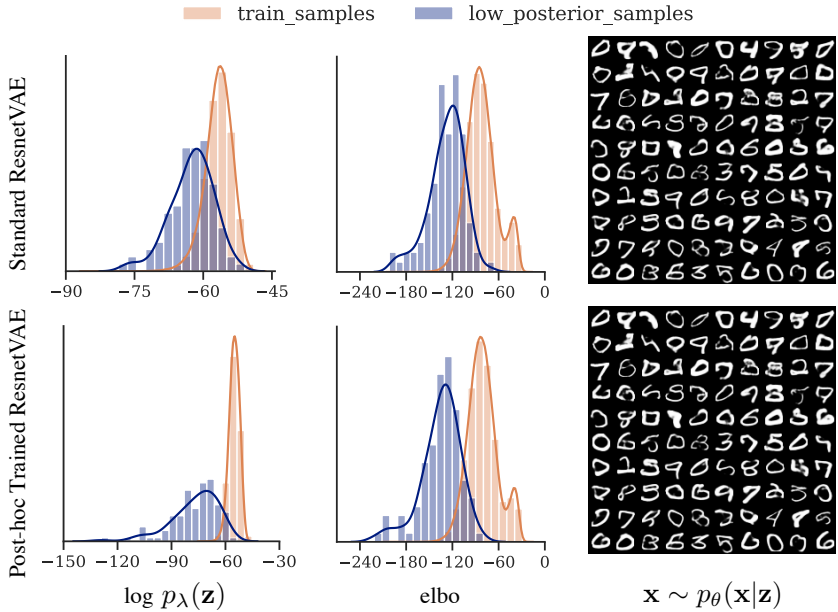}
	\caption{(left) Histograms of $\log p_{\lambda}(\vv{z})$ of the low posterior samples, (middle) histograms of ELBO of $\vv{x}$ samples corresponding to each low posterior sample, and (right) the means of these $\vv{x}$, on \theStandardVAE{} and \thePosthocVAE{}.}\label{fig:low-posterior-samples}
\end{figure}

The $\vv{x}$ samples of \thePosthocVAE{} (bottom right) are the same as those of \theStandardVAE{} (top right), since \thePosthocVAE{} has exactly the same $p_{\theta}(\vv{x}|\vv{z})$ and $q_{\phi}(\vv{z}|\vv{x})$ as \theStandardVAE{}.
However, the \textit{post-hoc trained} prior successfully assigns much lower $\log p_{\lambda}(\vv{z})$ (bottom left) than unit Gaussian prior (top left) on the \textit{low posterior samples}, which suggests that a \textit{post-hoc trained} prior can avoid granting high likelihoods to these samples in the latent space.
Note \thePosthocVAE{} also assigns slightly lower ELBO (bottom middle) than \theStandardVAE{} (top middle) to $\vv{x}$ samples corresponding to these \textit{low posterior samples}.

To verify whether a learned prior can avoid obtaining \textit{low posterior samples} in the first place, we obtained \textit{low posterior samples} from a \ResnetVAE{} with \textit{jointly trained} prior (denoted as \theJointVAE{}), see \cref{fig:low-posterior-samples-joint}.
Compared to \cref{fig:low-posterior-samples}, we can see that $\log p_{\lambda}(\vv{z})$ of these \textit{low posterior samples} and ELBO of the corresponding $\vv{x}$ samples are indeed substantially higher than those of \theStandardVAE{} and \thePosthocVAE{}.
However, the visual quality is not perfect, indicating there is still room for improvement.

\begin{figure}[!h]
	\centering
	\includegraphics[width=.8\linewidth]{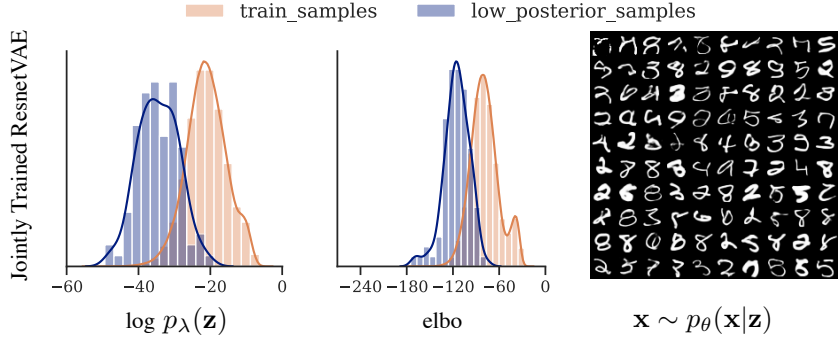}
	\caption{(left) Histograms of $\log p_{\lambda}(\vv{z})$ of the low posterior samples, (middle) histograms of ELBO of $\vv{x}$ samples corresponding to each low posterior sample, and (right) the means of these $\vv{x}$, on \theJointVAE{}.}\label{fig:low-posterior-samples-joint}
\end{figure}

\FloatBarrier

\subsection{Wall-clock times}
\label{sec:appendix-wallclock-times}
\begin{figure}[!h]
	\centering
	\includegraphics[width=.75\linewidth]{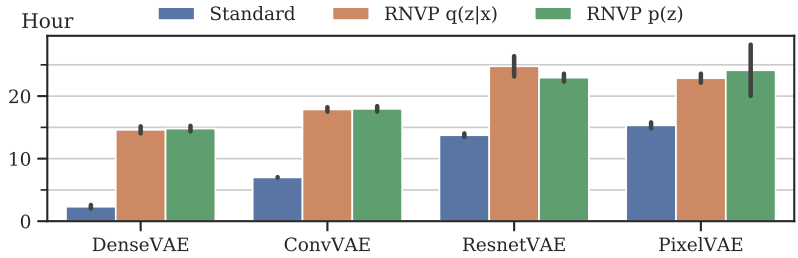}
	\caption{Average training time of various models on MNIST, flow depth $K=20$.  Black sticks are standard deviation bars.  For \PixelVAE{}, training mostly terminates in half way due to early-stopping.}\label{fig:average-time}
\end{figure}

\begin{figure}[!h]
	\centering
	\includegraphics[width=.75\linewidth]{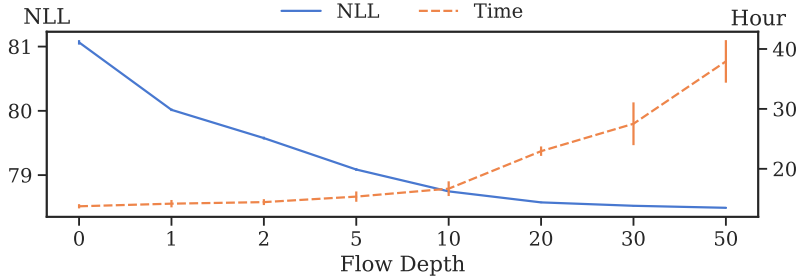}
	\caption{Average training time and test negative log-likelihood (NLL) of \ResnetVAE{} with \RealNVP{} prior of different flow depth.  Vertical sticks are standard deviation bars.}\label{fig:nll-time-with-flow-layers}
\end{figure}

We report the average training time of various models trained on MNIST, see \cref{fig:average-time,fig:nll-time-with-flow-layers}.
Each experiment runs on one GTX 1080 Ti graphical card.
In \cref{fig:average-time}, we can see that the computational cost of \RealNVP{} prior is independent with the model architecture.
For complicated architectures like \ResnetVAE{} and \PixelVAE{}, the cost of adding a \RealNVP{} prior is fairly acceptable, since it can bring large improvement.
In \cref{fig:nll-time-with-flow-layers}, we can see that for $K > 50$, there is likely to be little gain in test NLL, but the computation time will grow even larger.
That's why we do not try larger $K$ in our experiments.

\end{document}